\documentclass[11pt]{article}
\usepackage{fullpage}
\usepackage{amsmath}
\usepackage{amsfonts}
\usepackage{amsthm}
\usepackage{thm-restate}

\DeclareMathOperator*{\argmin}{arg\,min}







\newcommand{\R}{\mathbb{R}}

\newcommand{\1}{\mathds 1}

\newcommand{\E}{\mathbb{E}}
\newcommand{\EE}[1]{\mathbb{E}\left[#1\right]}

\newcommand{\PPs}[2]{\mathbb{P}_{#1}\left(#2\right)}




\newcommand{\cA}{\mathcal{A}}

\newcommand{\cD}{\mathcal{D}}

\newcommand{\cF}{\mathcal{F}}
\newcommand{\cG}{\mathcal{G}}
\newcommand{\cH}{\mathcal{H}}

\newcommand{\cP}{\mathcal{P}}


\newcommand{\br}{{\bf r}}

\newcommand{\bz}{{\bf z}}
\newcommand{\bZ}{{\bf Z}}

\renewcommand{\epsilon}{\varepsilon}
\renewcommand{\hat}{\widehat}

\renewcommand{\bar}{\overline}

\newcommand{\nothere}[1]{}



\newcommand{\err}{\mathrm{err}}
\newcommand{\Er}{{\mathrm{Er}}}
\newcommand{\vcd}{\mathrm{VCdim}}



\newcommand{\gdim}{\mathrm{Gdim}}
\newcommand{\ind}{\mathbb I}

\newcommand{\remove}[1]{}

\usepackage{natbib}
\usepackage{hyperref}
\usepackage{cleveref}

\newtheorem{definition}{Definition}[section]
\newtheorem{example}{Example}[section]

\newtheorem{remark}{Remark}[section]

\newtheorem{theorem}{Theorem}[section]
\newtheorem{lemma}[theorem]{Lemma}

\newtheorem{claim}[theorem]{Claim}

\date{}
\author{Cynthia Dwork\thanks{Harvard University.} \and Lunjia Hu\thanks{Harvard University. Supported by the Simons Foundation Collaboration on the Theory of Algorithmic Fairness and the Harvard Center for Research on Computation and Society (CRCS).} \and Han Shao\thanks{Harvard University. Supported by the Harvard Center of Mathematical Sciences and Applications (CMSA).}}

\title {How Many Domains Suffice for Domain Generalization?\\ A Tight Characterization via the Domain Shattering Dimension}

\begin{document}
\maketitle
\begin{abstract}
We study a fundamental question of domain generalization: given a family of domains (\textit{i.e.}, data distributions), how many randomly sampled domains do we need to collect data from in order to learn a model that performs reasonably well on every seen and unseen domain in the family? We model this problem in the PAC framework and introduce a new combinatorial measure, which we call the \emph{domain shattering dimension}. We show that this dimension characterizes the \emph{domain sample complexity}. Furthermore, we establish a tight quantitative relationship between the domain shattering dimension and the classic VC dimension, demonstrating that every hypothesis class that is learnable in the standard PAC setting is also learnable in our setting.
\end{abstract}

\section{Introduction}
The ability to generalize across domains is an important component of human intelligence and a crucial milestone in the development of increasingly powerful artificial intelligence. 
It is not surprising that an experienced driver in one country can reasonably, albeit imperfectly, drive in another country even without further training.
It is also not surprising that a skilled chess player can outperform an average person on a totally different board game.
An expert doctor who has studied a disease using data from a few large hospitals can potentially provide reasonable treatment for patients in a geographically remote and biologically different population that does not have access to those large hospitals and has not been the subject of prior study.
Such domain generalization abilities are empowered by the capability of the learner (i.e., the driver, chess player, or doctor) to distill and master universal laws (about driving, game playing, or medical treatment) that hold even on unseen domains, while separating them from idiosyncratic patterns that are domain-specific.

In this work, we study the theoretical foundations of domain generalization with a focus on capturing this ability of learning ``universal laws'' while not overfitting to the ``idiosyncratic patterns''. Consider a family $\cG$ of domains, where every domain $\cD$ is a distribution on examples $(x,y)$ each consisting of an instance $x\in X$ and a binary label $y\in \{0,1\}$. On a specific domain $\cD$, there may be a hypothesis (i.e., classifier) $h:X\to \{0,1\}$ with low classification error $\err_\cD(h)$, say, below $0.01$:
\[
\err_\cD(h):= \Pr\nolimits_{(x,y)\sim \cD}[y\ne h(x)]\leq 0.01.
\]
However, the strong performance of this hypothesis may rely on patterns specific to the current domain and may fail to transfer to other domains $\cD' \in \cG$. For the purpose of domain generalization, we would instead prefer a different hypothesis $h^\star$ that achieves reasonably low—though not necessarily minimal—error across \emph{all} domains $\cD \in \cG$. Specifically, for a mild but meaningful error threshold, say $0.3$, we assume
\begin{equation}
\label{eq:intro-1}
\max_{\cD \in \cG} \err_\cD(h^\star) \le 0.3.
\end{equation}
Motivated by this, we model the underlying ``universal laws'' by assuming the existence of such a universally good hypothesis $h^* \in \cH$ that satisfies \Cref{eq:intro-1}. We make no further assumptions on the structure of the domains in $\cG$ or on the relationships among them.

In our domain generalization task, a learner has data access to a limited number of domains $\cD_1,\ldots,\cD_k$ sampled i.i.d.\ from a meta-distribution $\cP$ over domain space $\cG$. Using this limited data access, the learner's goal is to output a model $h:X\to \{0,1\}$ that achieves reasonably good performance on new, unseen domains drawn from $\cP$ \emph{without any additional training}. 
Concretely, for performance threshold and error parameter $\tau,\gamma\in [0,1]$, the learner's goal is to output $h:X\to \{0,1\}$ such that
\[
\Pr\nolimits_{\cD\sim \cP}[\err_\cD(h) \le \tau] \ge 1- \gamma.
\]

In this work, we aim to answer the following quantitative question:
\begin{center}
    \textbf{How many domains does the learner need to see to achieve domain generalization?}
\end{center}
The answer to this question, which we call the \emph{domain sample complexity}, is analogous to the notion of sample complexity in classic learning theory.  
Sample complexity is defined as the number of \emph{data points} needed to learn a good model on a \emph{single} domain, whereas our notion of domain sample complexity measures the number of \emph{domains} the learner needs to collect data from in order to generalize to unseen domains. Thus, domain sample complexity can be viewed as a higher-level meta-variant of the notion of sample complexity.


Characterizing the sample complexity of various learning tasks is a central question in learning theory. 
For example, the celebrated VC theory characterizes the sample complexity of classic PAC learning \citep{pac} using the VC dimension \citep{vc}, a combinatorial complexity measure of the hypothesis class $\cH$.
The goal of our work is to characterize the domain sample complexity, which depends not only on the complexity of the hypothesis class $\cH$, but also on the \emph{interaction} between $\cH$ and the domain family $\cG$. For example, even if $\cH$ is itself very complex, the domain sample complexity can still be very small if the domains in $\cG$ are very similar. 
Even when both $\cH$ and $\cG$ are complex, the domain sample complexity can still be small if their complexities are concentrated on disjoint subsets of the input space $X$. For instance, suppose all hypotheses $h\in \cH$ are identical on a subset $X_1\subseteq X$, and $\cG$ only contains domains $\cD$ satisfying $\Pr_\cD[x\in X_1] = 1$.
In this case $\cH$ can be complex outside of $X_1$ and $\cG$ can be complex inside $X_1$, but the domain sample complexity is always zero: on every fixed domain $\cD\in \cG$, all hypotheses $h\in \cH$ achieve the same error.
Finding a combinatorial quantity that tightly characterizes the domain sample complexity requires accurately capturing this interaction between $\cH$ and $\cG$.\footnote{
A similar situation that requires capturing the \emph{interaction} between a pair of hypothesis classes occurs in the study of \emph{comparative learning} (hybrid of \emph{realizable} and \emph{agnostic} learning) \citep{hp,hpr}.}


\subsection{Our Contributions}\label{sec:contribution}
We model the domain generalization problem in the PAC framework and introduce a new combinatorial measure, the \emph{domain shattering dimension} (\Cref{def:gdim}), for a hypothesis class $\cH$ with respect to a family $\cG$ of domains.

For each hypothesis $h$, we define the function $\err_{\cdot}(h): \cG \to \R$ that maps each domain to the error of $h$ on that domain, thereby inducing a new function class. A natural idea is to analyze the fat-shattering dimension~\citep{kearns1994efficient} of this class.
This turns out to overestimate the domain sample complexity. For a target error threshold $\tau$ (e.g.\ $\tau = 0.3$), the domain sample complexity can be very small (e.g.\ when $\err_\cD(h)\le \tau$ is achieved by every $h\in \cH$ on every domain $\cD\in \cG$), but the size of a shattered set can be large at a different threshold $\tau'$, causing the fat-shattering dimension to be large as well.

To address this, we propose a new dimension by modifying the fat-shattering dimension: instead of allowing different thresholds across shattered domains, we require a uniform and fixed threshold across all of them. We show that the resulting domain shattering dimension gives both upper (\Cref{thm:ub}) and lower (\Cref{thm:lb}) bounds on the domain sample complexity, and these bounds match up to a poly-logarithmic factor.
We obtain the upper bound by a min-max variant of the empirical risk minimization (ERM) algorithm.
To analyze the domain sample complexity achieved by the min-max ERM algorithm, we establish a uniform convergence bound (\Cref{lm:uc-partial}) for \emph{partial concept classes} based on a generalized Sauer-Shelah-Perles lemma by \citet{alon2022theory}, which may be of independent interest.
Our algorithm and upper bound can be directly applied to other learning tasks beyond binary classification, such as multi-class classification and regression (\Cref{remark:beyond-binary}).

The next question we address in this paper is \textit{the relationship between domain generalization learnability and standard PAC learnability}. It is well known that PAC learnability is characterized by the VC dimension. We compare our domain shattering dimension with the VC dimension and show that, for a hypothesis class $\cH$ with VC dimension $d$ and a margin $\alpha$, the domain shattering dimension is always upper bounded by $O(d \log (1/\alpha))$ (\Cref{lm:vc-lowerb}). Moreover, we construct a hypothesis class with VC dimension $d$ and a domain family $\cG$ for which the domain shattering dimension is $\Omega(d \log (1/\alpha))$ (\Cref{lm:large-k}). 
This establishes a tight relationship between the two measures and demonstrates that standard PAC learnability implies domain generalization learnability, but the domain sample complexity can be much smaller than the sample complexity for PAC learning.

Finally, we relate our work to the literature on domain adaptation. In domain adaptation, the goal is to generalize from a source distribution to a related target distribution, under the assumption that the two are sufficiently similar. This similarity is often quantified using measures such as the $\cH$-divergence introduced in~\cite{ben2010theory}. We show that if all domains in $\cG$ are similar to each other under a metric based on a modified version of the $\cH$-divergence, then the domain shattering dimension is $1$. Moreover, when the hypothesis class admits a finite cover under this metric, the domain shattering dimension can be upper bounded by the covering number (\Cref{thm:cover}). These results illustrate the potential of our domain shattering dimension as a general notion for characterizing domain generalization without explicitly modeling how domains are related.

\subsection{Related Work} 
There is a vast literature on domain generalization and related learning paradigms such as meta-learning, zero-shot learning, domain adaptation, out of distribution generalization, transfer learning, invariant risk minimization, and multi-task learning. 
The excellent survey of domain generalization of \cite{wang2022generalizing} provides a helpful taxonomy of many of these paradigms and explores theoretical underpinnings of domain adaptation and domain generalization.
\cite{zhou2022domain} organizes a plethora of works by (1) application area; (2) method; and (3) learning paradigm, and provides pointers to many datasets commonly used for domain generalization.

Our work is distinguished from prior theoretical work in that simultaneously  
(1) we make only the minimalist assumption of the existence of a universally good hypothesis $h^*\in \cH$, (e.g., one satisfying Equation~\eqref{eq:intro-1}), with no further requirements regarding how different domains in $\cG$ are related, making our model very general; and (2) we obtain \textit{provable} results for the strong learning objective of requiring the output model $h$ to achieve a reasonably low error simultaneously on essentially \emph{all} domains (including the unobserved ones), rather than achieving low error \emph{in expectation} over choice of domain.
Achieving low error simultaneously on all domains can be substantially different from achieving low average error when $\tau$ is not very close to zero. For example, an average error of $0.25$ could arise if the hypothesis incurs zero error on $3/4$ of the domains but makes completely incorrect predictions (error $1$) on the remaining $1/4$ of the domains.
Finally, we explicitly focus on the question of \emph{domain} sample complexity -- the total number of domains that need to be sampled in order to generalize well to a random new domain. 
This is different from the common notions of sample complexity and query complexity,
and this lens yielded the key concept of the domain shattering dimension.


Many previous theoretical works on domain generalization make explicit assumptions on how domains are related.  
\cite{blanchard2011generalizing} formulated the problem and approached it using kernel methods.  They and \cite{muandet2013domain} differ from our work in assuming similarity among the various domains (possibly after a learned transformation); moreover, they achieve low error only in expectation over the choice of test domain.  
More recently, \cite{afl} assumes that the learner has access to a family of training domains, and the test domains are a family of mixtures (i.e., convex combinations) of the training domains. The coefficients of the convex combinations are assumed to be in a restricted range. 
The works of \cite{transform,transformation-invariant} assume that the domains are related by transformations. These works \citep{afl,transform,transformation-invariant} all use a min-max objective similar to our work. \cite{ben2010theory} studies learning with a source (training) domain and a target (test) domain with the assumption that the each hypothesis $h\in \cH$ has similar overall average on the two domains (they define $\cH$-divergence as the largest gap between the two averages). \cite{kim2022universal} assume that the ratios of the probability density  between domains (i.e.\ propensity scores) are bounded and come from a restricted family. \cite{scholkopf2012causal,zhang2015multi,gong2016domain} formulate the problem from a causal perspective and make corresponding causal assumptions.

Inspired by causal reasoning, and seeking to eliminate spurious correlations,
\cite{arjovsky2019invariant} introduced {\em Invariant Risk Minimization}, in which the different domains correspond to training data collected in different environments which ``could represent different measuring circumstances locations, times, experimental conditions'' and so on. Invariant Risk Minimization seeks a representation mapping for instances, under which the optimal classifier is identical for all environments.  \cite{deng-representation}, whose setting and assumptions exactly match ours, also seeks a representation approach, specifically, via adversarial learning using techniques developed in~\cite{fair-rep}; their algorithmic results have provable guarantees for unseen domains only when the observed domains form a cover with respect to $\cH$-divergence for the set of possible domains.

The work of \cite{unexpected} studies domain generalization with the goal of minimizing the average error across domains, which is equivalent to the error on the single large domain defined as the mixture of all the domains in the problem. This makes their learning task similar to the standard PAC learning with the additional advantage that the training data is grouped by domain. This advantage allows them to show computationally efficient learning algorithms for specific learning problems that do not have known efficient algorithms in the standard PAC learning setting.

\cite{alon2024erm} study a related meta-learning problem, where instead of outputting a single hypothesis $h$, their learner outputs a hypothesis class which can be used later to learn a good hypothesis on a new task, analogous to a new domain, given additional data from that task. They focus on the realizable setting, with performance measured by the average accuracy over random new tasks. Relatedly, a line of work on algorithmic meta-learning studies how experience from previous tasks can be used to adaptively tune an initialization, learning rate, or algorithmic parameters in sequential and online learning settings \citep{khodak2019adaptive,balcan2021learning,khodak2023meta}. In contrast, our focus is on characterizing the number of source domains needed to learn a single predictor that generalizes to unseen domains without task-specific adaptation. A large body of work studies meta-learning for linear regression and related problems under structural assumptions on the underlying tasks, such as the existence of a shared low-dimensional subspace or representation \citep{balcan2015efficient,AMB,KSSKO,TJJ20,TJJ,du2021,CHMS,TJNO,DFHT22,ABBSSU24}.

There is a large body of recent and earlier work on multi-distribution learning \citep{derandom-multi,optimal-multi,peng-multi,open-multi,multi-dist,chen-collaborative,huy-collaborative,collaborative}. This problem is closely related to ours in that they also do not impose structural assumptions on the domains, and they also consider the min-max objective. However, their goal is to learn a hypothesis that performs well on the \emph{observed} domains, so their focus lies in the total number of data points required for learning (i.e., the sample or query complexity). In contrast, we aim to generalize to \emph{unobserved} domains and focus on the number of \emph{domains} the learner needs to observe--the domain sample complexity.
\section{Preliminaries}
We include the definition of \emph{partial concepts} that will be very useful in our analysis. A partial concept is a function $f:Z\to \{0,1,\bot\}$ that assigns each individual $z$ in an input space $Z$ a binary label (0 or 1), or the ``unknown'' label, denoted by $\bot$. A \emph{total} concept $f:Z\to \{0,1\}$ assigns a binary label (0 or 1) to each individual $z\in Z$ without using the ``unknown'' label $\bot$. Throughout the paper, for a positive integer $n$, we use $[n]$ to denote the set $\{1,\ldots,n\}$.
\begin{definition}[VC dimension of partial concept classes \citep{vc,alon2022theory}]
\label{def:vc}
    Let $\cF$ be a class of partial concepts $f:Z\to \{0,1,\bot\}$ on an arbitrary input space $Z$. We say a subset $S\subseteq Z$ is \emph{shattered} by $\cF$ if for every subset $E\subseteq S$, there exists $f_E\in \cF$ such that
    \begin{align*}
        f_E(s) & = 0 \quad \text{for every $s\in E$},\\
        f_E(s) & = 1 \quad \text{for every $s\in S\setminus E$}.
    \end{align*}
    The VC dimension of $\cF$ (denoted $\vcd(\cF)$) is the size of the largest shattered set $S\subseteq Z$.
\end{definition}
Clearly, the classical VC dimension, defined for total concept classes, is a special case of the VC dimension for partial concept classes.

\remove{  
\begin{theorem}[Quasipolynomial Sauer-Shelah-Perles Lemma for Disambiguations of Partial Concepts \citep{alon2022theory}]
\label{thm:sauer-partial}
    Let $\cF$ be a class of partial concepts $f:Z\to \{0,1,\bot\}$ defined on an arbitrary domain $Z$ with VC dimension $d$. Let $S$ be a subset of $Z$ with size $|S| = n > 1$. Then there exists a class $\bar \cF$ of total concepts $\bar f:S\to \{0,1\}$ with size
    \[
    |\bar \cF| = n^{O(d\log n)}
    \]
    that satisfies the following. For every $f\in \cF$, there exists $\bar f\in \bar \cF$ such that for every $s\in S$ with $f(s)\in \{0,1\}$, it holds that $\bar f(s) = f(s)$.
\end{theorem}
}




\section{Problem Setup for Domain Generalization}\label{sec:problem}
Given input space $X$ and label space $Y=\{0,1\}$, a domain (or data distribution) $\cD$ is a distribution over $X\times Y$.
We consider a family $\cG$ of domains $\cD$. For any hypothesis/predictor $h:X\to Y$, we define the error rate of $h$ under domain $\cD$ as
\[\err_{\cD}(h):=\PPs{(x,y)\sim \cD}{y\ne h(x)}\,.\]
We consider an underlying meta distribution $\cP$ over the domains $\cD\in \cG$. Given a threshold $\tau$, for any hypothesis $h$ and a threshold $\tau$, we define the domain error of $h$ with respect to $\tau$ as
\[\Er_{\cP,\tau}(h) := \PPs{\cD\sim \cP}{\err_{\cD}(h)>\tau}\,,\]
which quantifies the probability mass of the domains where $h$ incurs error greater than $\tau$. It follows immediately that $\Er_{\cP,\tau}(h)$ is monotonically decreasing in $\tau$.

As in the standard PAC learning setting, we are given a hypothesis class $\cH$ and aim to output a hypothesis that performs well compared to the best hypothesis in $\cH$.
\begin{definition}[Optimal domain error bound and optimal hypothesis]
    Given a hypothesis class $\cH$, domain class $\cG$, and a distribution $\cP$ over $\cG$, the optimal domain error bound is defined as 
    \[\tau^\star_{\cP,\cH} = \min\{\tau|\exists h\in \cH, \Er_{\cP,\tau}(h) =0\}\,,
    \]
    and the optimal hypothesis $h^\star$ is defined as one hypothesis $h$ achieving $\Er_{\cP,\tau^\star_{\cP,\cH}}(h) =0$.
\end{definition}
This definition ensures that there exists a hypothesis $h^\star$ in $\cH$ that achieves an error rate below threshold $\tau^\star_{\cP,\cH}$ on every domain sampled from $\cP$, and $\tau^\star_{\cP,\cH}$ represents the smallest such achievable threshold. Thus, $\tau^\star_{\cP,\cH}$ serves as a benchmark for our learning problem and any threshold $\tau \geq \tau^\star_{\cP,\cH}$ is achievable. We assume that $\tau^\star_{\cP,\cH}$ is reasonably small--if $\tau^\star_{\cP,\cH} = 0.5$, then no hypothesis in $\cH$ can perform well across the domains--though we do not assume it is zero.

Then given a hypothesis class $\cH$ and a threshold $\tau$, 
a learner $\cA$ access to a set of i.i.d. sampled domains $G = \{\cD_1, \cD_2, \ldots, \cD_n\}$, and i.i.d.\ data $S_i$ collected from each domain $\cD_i$, with the goal of outputting a hypothesis $h:= \cA(\{S_1,\ldots,S_n\})$ that achieves error rate of at most $\tau$ under almost every domain. 
This can be formalized into the following learnability problem for domain generalization, where we focus on the domain sample complexity: the number $n$ of observed domains needed to achieve domain generalization.
\begin{definition}[$(\tau, \alpha, \gamma,\delta)$-domain learnability]
\label{def:domain-learnability}
    For any $ \tau, \alpha, \gamma,\delta\in (0,1)$, we say $(\cH,\cG)$ is $(\tau, \alpha, \gamma,\delta)$-learnable if there exists finite integers $n$ and $m$ for which there exists an algorithm $\cA$ such that for any distribution $\cP$ over $\cG$ with optimal error bound $\tau^\star_{\cP,\cH} \le \tau - \alpha$, with probability at least $1-\delta$ over $G\sim \cP^n$ and $S_i\sim \cD_{i}^m$ for all $\cD_i\in G$,
    \[\Er_{\cP,\tau}(\cA(\{S_1,\ldots,S_n\}))\leq \gamma\,.\]
    The domain sample complexity is the smallest integer $n$ satisfying the above constraint.
\end{definition}



\section{Characterizing Domain Sample Complexity Using Domain Shattering Dimension}
To characterize the domain sample complexity, we introduce the following combinatorial measure inspired by the fat-shattering dimension \citep{kearns1994efficient}.

\begin{definition}[Domain shattering dimension]\label{def:gdim}
Let $\cG$ be a set of domains.  We say 
    a subset $S \subseteq \cG$ is $\alpha$-shattered by $\cH$ at $\tau$ if for all $E\subseteq S$, there exists a hypothesis $h_E\in \cH$ satisfying 
    \begin{align*}
        \err_{\cD}(h_E) & < \tau-\alpha  && \hspace{-8em} \text{for every $\cD\in E$},\\
         \err_{\cD}(h_E) & > \tau  && \hspace{-8em} \text{for every $\cD\in S \setminus E$}.
    \end{align*}
    The domain shattering dimension of $\cH$, denoted by $\gdim(\cH, \cG,\tau,\alpha)$, is defined to be the maximum size of set $S$ that can be $\alpha$-shattered by $\cH$ at $\tau$. 
\end{definition}
\textbf{Monotonicity of domain shattering dimension.} It follows directly from the definition that for any $\tau,\alpha,\tau',\alpha'$, if $\tau'\geq \tau$ and $\tau'-\alpha'\leq \tau -\alpha$, we have $\gdim(\cH, \cG,\tau',\alpha')\leq \gdim(\cH, \cG,\tau,\alpha)$. 
\subsection{Upper Bound}
\label{sec:ub}

We show an upper bound on the domain sample complexity using the domain shattering dimension in \Cref{thm:ub} below. We prove this upper bound using the following natural min-max variant of the empirical risk minimization (ERM) algorithm. In the next subsection (\Cref{sec:lb}), we will prove a lower bound that matches our upper bound up to a polylogarithmic factor.

\paragraph{Min-Max ERM Algorithm.} Given a set of i.i.d.\ sampled domains $G = \{\cD_1, \cD_2, \ldots, \cD_n\}$, we assume access to approximate error rates for any hypothesis $h \in \cH$ on each domain. Specifically, for some $\varepsilon > 0$, for every domain $\cD \in G$ and hypothesis $h \in \cH$, we can access an estimate $\hat{\err}_\cD(h)$ of the true error $\err_\cD(h)$ such that
\begin{equation}
\label{eq:err-estimate}
|\hat{\err}_\cD(h) - \err_\cD(h)| < \epsilon \quad \text{for all $\cD\in G$ and $h \in \cH$}.
\end{equation}
By standard uniform convergence guarantees,
with success probability at least $1-\delta$, these estimates can be obtained as the empirical error on $O((\vcd(\cH) + \allowbreak \log(n/\delta))/\epsilon^2)$ i.i.d.\ data points from each domain's distribution.
Given access to $\hat{\err}_\cD(h)$, we return the min-max predictor 
\begin{equation}
\label{eq:min-max}
\hat h = \argmin_{h\in \cH}\max_{\cD\in G}\hat \err_{\cD}(h)\,.
\end{equation}
The following theorem upper bounds $\Er_{\cP,\tau}(\hat h)$ in terms of the domain shattering dimension and the number of training domains. This implies a domain sample complexity upper bound.
\begin{restatable}{theorem}{ub}\label{thm:ub}
Let $\cH$ be a class of hypotheses $h:X\to \{0,1\}$ and let $\cG$ be a family of domains $\cD$ each being a distribution over $X\times \{0,1\}$. Define $d:=\gdim(\cH, \cG, \tau, \alpha)$ for $\tau,\alpha\in [0,1]$. For every $\varepsilon,\delta\in (0,1/2)$, for every domain distribution $\cP$ over $\cG$ satisfying $\tau^\star_{\cP,\cH}\leq \tau - \alpha- 2\epsilon$, with probability at least $1-\delta$ over a sample $G$ of $n$ i.i.d.\ domains drawn from $\cP$, when given access to $\hat{\err}_\cD(h)$ for all $\cD\in G$ and $h\in \cH$ satisfying \eqref{eq:err-estimate}, the min-max predictor $\hat h$ in \eqref{eq:min-max} satisfies
    \[\Er_{\cP,\tau}(\hat h) \leq O\left(\frac{d\log^2n + \log(1/\delta)}{n}\right).\]
\end{restatable}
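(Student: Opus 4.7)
The plan is to reduce the theorem to a PAC-style generalization argument over a suitable partial concept class on the domain space $\cG$. For each $h \in \cH$, I define
\begin{equation*}
f_h(\cD) = \begin{cases} 0 & \text{if } \err_{\cD}(h) < \tau - \alpha, \\ 1 & \text{if } \err_{\cD}(h) > \tau, \\ \bot & \text{otherwise}, \end{cases}
\end{equation*}
and let $\cF := \{f_h : h \in \cH\}$. Comparing Definitions \ref{def:gdim} and \ref{def:vc}, it follows immediately that $\vcd(\cF) = \gdim(\cH, \cG, \tau, \alpha) = d$. The target quantity rewrites as $\Er_{\cP,\tau}(\hat h) = \PPs{\cD \sim \cP}{f_{\hat h}(\cD) = 1}$, so it suffices to show that $f_{\hat h}$ outputs the label $1$ on only a small fraction of population domains.

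Next I would show that $\hat h$ behaves as a "consistent" learner in the partial-concept sense, namely $f_{\hat h}(\cD_i) = 0$ on every training domain. By the definition of $\tau^\star_{\cP,\cH}$, there exists $h^\star \in \cH$ with $\err_{\cD}(h^\star) \leq \tau^\star_{\cP,\cH} \leq \tau - \alpha - 2\varepsilon$ for $\cP$-almost every $\cD$; hence with probability $1$ over $G \sim \cP^n$, every sampled $\cD_i$ satisfies $\err_{\cD_i}(h^\star) \leq \tau - \alpha - 2\varepsilon$, and combining with \eqref{eq:err-estimate} yields $\hat \err_{\cD_i}(h^\star) < \tau - \alpha - \varepsilon$. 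The min-max optimality of $\hat h$ then gives $\max_{\cD \in G} \hat \err_{\cD}(\hat h) \leq \max_{\cD \in G} \hat \err_{\cD}(h^\star) < \tau - \alpha - \varepsilon$; applying \eqref{eq:err-estimate} in the other direction produces $\err_{\cD_i}(\hat h) < \tau - \alpha$ for every $i$, which is precisely $f_{\hat h}(\cD_i) = 0$.

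Finally, I would invoke the uniform convergence bound for partial concept classes (Lemma \ref{lm:uc-partial}). Under the realizability provided by $f_{h^\star}$, which labels $\cP$-almost every domain with $0$, with probability at least $1 - \delta$ every $f \in \cF$ satisfying $f(\cD_i) \neq 1$ for all training $\cD_i$ must obey $\PPs{\cD \sim \cP}{f(\cD) = 1} \leq O\bigl((d \log^2 n + \log(1/\delta))/n\bigr)$. Since the preceding step shows $f_{\hat h}$ belongs to this "consistent" set (indeed with the stronger property $f_{\hat h}(\cD_i) = 0$), the theorem follows by identifying this probability with $\Er_{\cP,\tau}(\hat h)$.

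The main obstacle is establishing Lemma \ref{lm:uc-partial} itself rather than applying it. The $d \log^2 n$ factor inherits from the quasi-polynomial growth function $n^{O(d \log n)}$ coming from the generalized Sauer-Shelah-Perles lemma of \citet{alon2022theory}; routing this through a standard symmetrization / double-sampling argument requires being careful that the $\bot$ region, on which the reference $f_{h^\star}$ and the output $f_{\hat h}$ may freely disagree, is not inadvertently counted as an error on either the training or the population side, so that only the one-sided event $\{f = 1\}$ gets bounded.
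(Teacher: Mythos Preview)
Your proposal is correct and follows essentially the same approach as the paper: define the partial concept class $\cF = \{f_h : h \in \cH\}$, observe $\vcd(\cF) = d$, show via \eqref{eq:err-estimate} and the min-max property that $f_{\hat h}(\cD_i) = 0$ on every training domain, and then invoke \Cref{lm:uc-partial}, whose proof by symmetrization and the quasipolynomial Sauer--Shelah--Perles lemma of \citet{alon2022theory} you correctly anticipate. One minor note: \Cref{lm:uc-partial} as stated requires $f(\cD_i)=0$ (not merely $f(\cD_i)\ne 1$) and does not need any realizability hypothesis on $f_{h^\star}$, but since you already establish the stronger condition $f_{\hat h}(\cD_i)=0$ this does not affect the argument.
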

\Cref{thm:ub} guarantees good accuracy of $\hat h$ on new, unseen domains drawn from $\cP$. To prove this result, we start by analyzing the performance of $\hat h$ on the training domains in $G$. By the assumption of \Cref{thm:ub}, there exists $h^\star\in \cH$ such that $\err_\cD(h^\star) \le \tau - \alpha - 2\varepsilon$ for every $\cD\in G$. Now by \eqref{eq:err-estimate} and \eqref{eq:min-max}, we have
\begin{equation}
\label{eq:upper-1}
    \err_\cD(\hat h) < \hat \err_\cD(\hat h) +\epsilon \leq \hat \err_\cD(h^\star) +\epsilon < \err_\cD(h^\star) + 2\epsilon \le \tau - \alpha \quad \text{for every $\cD \in G$}.
\end{equation}
This ensures that $\hat h$ achieves low error on every training domain $\cD \in G$. To prove \Cref{thm:ub}, we need to show that $\hat h$ achieves low error on new domains drawn from $\cP$. Specifically, for some
\[
\gamma = O\left(\frac{d\log^2n + \log(1/\delta)}{n}\right),
\]
our goal is to show that with probability at least $1-\delta$, the $\hat h$ returned by min-max ERM satisfies
\begin{equation}
\label{eq:upper-2}
\Pr_{\cD\sim \cP}[\err_\cD(\hat h) > \tau] \le \gamma. 
\end{equation}
To prove the guarantee \eqref{eq:upper-2} for $\cD\sim \cP$ from the guarantee \eqref{eq:upper-1} for $\cD\in G$, we establish a uniform convergence bound over all $h\in \cH$. 

One natural idea for establishing a desired uniform convergence bound is by applying existing results about the fat-shattering dimension \cite{kearns1994efficient}, which is similar to our domain shattering dimension. However, this idea falls short for our purpose because 1) the fat-shattering dimension is defined as a maximum over \emph{all} thresholds $\tau$, and 2) prior results from the fat-shattering dimension incur a constant-factor blow-up in the margin $\alpha$.
We instead use results about partial concept classes from \citet{alon2022theory}. Specifically, for each $h\in \cH$, we construct a partial concept $f_h:\cG\mapsto \{0,1,\perp\}$ by letting
    \begin{align*}
        f_h(\cD) =\begin{cases}
            1 & \text{ if } \err_\cD(h)> \tau,\\
            0 & \text{ if } \err_{\cD}(h)< \tau-\alpha,\\
            \perp & \text{o.w.}
        \end{cases} 
    \end{align*}
    This allows us to construct a new partial concept class $\cF =\{f_h|h\in \cH\}$. 
    Our assumption in \Cref{thm:ub} that the domain shattering dimension of $\cH$ is $d$ ensures that the VC dimension of $\cF$ is $d$ (see \Cref{def:vc,def:gdim}). Now \eqref{eq:upper-1} can be equivalently written as
    \[
    f_{\hat h}(\cD) = 0 \quad \text{for every $\cD \in G$},
    \]
    and similarly, our goal \eqref{eq:upper-2} is equivalent to
    \[
    \Pr_{\cD\sim \cP}[f_{\hat h}(\cD) = 1] \le \gamma.
    \]
    Thus, to prove \Cref{thm:ub}, it suffices to establish the following uniform convergence bound:
\[
\Pr_{G\sim\cP^n}[\exists f\in \cF \text{ s.t.\ } \Pr\nolimits_{\cD\sim \cP}[f(\cD) = 1] > \gamma \text{ and }\forall \cD\in G,f(\cD) = 0]\le \delta.
\]
We formally state this uniform convergence bound in the following general lemma:\footnote{We remark that a key message from the work of \citet{alon2022theory} is that uniform convergence and the ERM algorithm both fail for learning partial concepts. This, however, does not contradict our \Cref{lm:uc-partial}. Roughly speaking, the uniform convergence needed in the setting of \citet{alon2022theory} requires replacing $\Pr_{z\sim \cP}[f(z) = 1]$ in \eqref{eq:uc-partial} with $\Pr_{z\sim \cP}[f(z) = 1 \text{ or } \bot]$. This stronger form of uniform convergence does not hold.}
\begin{restatable}[Uniform convergence for partial concepts]{lemma}{ucpartial}
\label{lm:uc-partial}
    Let $\cF$ be a class of partial concepts $f:Z\to \{0,1,\bot\}$ on an arbitrary input space $Z$. Assume that $\cF$ has VC dimension $d$. For every $n\in \bZ_{>0}$ and $\delta\in (0,1/2)$, there exists 
    \begin{equation}
    \label{eq:uc-gamma}
    \gamma = O\left(\frac{d\log^2n + \log(1/\delta)}{n}\right)
    \end{equation}
    such that for every distribution $\cP$ over $Z$, for $n$ i.i.d.\ data points $z_1,\ldots,z_n$ drawn from $\cP$,
    \begin{equation}
    \label{eq:uc-partial}
    \Pr_{z_1,\ldots,z_n}[\exists f\in \cF \text{ s.t.\ } \Pr\nolimits_{z\sim \cP}[f(z) = 1] > \gamma \text{ and }\forall i\in [n], f(z_i) = 0] \le \delta.
    \end{equation}
\end{restatable}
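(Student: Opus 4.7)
The plan is to prove \Cref{lm:uc-partial} via the standard double-sample symmetrization argument adapted to partial concepts, with the quasipolynomial Sauer--Shelah--Perles lemma for disambiguations from \citet{alon2022theory} as the key combinatorial ingredient.

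First I would introduce a ghost sample $z_{n+1},\ldots,z_{2n}$ drawn i.i.d.\ from $\cP$ independently of $z_1,\ldots,z_n$. Let $A$ denote the bad event in \eqref{eq:uc-partial}, and let $B$ be the event that there exists $f\in\cF$ such that $f(z_i)=0$ for every $i\in[n]$ and $|\{j\in\{n+1,\ldots,2n\}:f(z_j)=1\}|\geq \gamma n/2$. For any fixed $f$ with $\Pr_{z\sim\cP}[f(z)=1]>\gamma$, the number of ghost points on which $f$ equals $1$ is $\mathrm{Binomial}(n,p)$ with $p>\gamma$, so a standard Chernoff bound gives that it exceeds $\gamma n/2$ with probability at least $1/2$, provided $\gamma n$ exceeds an absolute constant. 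Applying this to the witness $f$ guaranteed by event $A$ yields $\Pr[A]\leq 2\Pr[B]$.

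Next I would condition on the multiset $T=\{z_1,\ldots,z_{2n}\}$ and exploit the fact that the labeling of points as training vs.\ ghost is now a uniformly random split (equivalently, an independent swap of each pair $(z_i,z_{n+i})$ with probability $1/2$). Invoking the partial Sauer--Shelah--Perles lemma of \citet{alon2022theory} on $T$ produces a class $\bar\cF$ of total concepts $\bar f:T\to\{0,1\}$ with $|\bar\cF|=(2n)^{O(d\log n)}$ such that every $f\in\cF$ admits a disambiguation $\bar f\in\bar\cF$ agreeing with $f$ wherever $f\in\{0,1\}$. Because disambiguation preserves $0$s and $1$s, the occurrence of event $B$ for some $f$ implies the corresponding event for its disambiguation $\bar f$: all the relevant $\bar f(z_i)=0$ and $\bar f(z_j)=1$ values carry over unchanged.

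Now I would union-bound $\Pr[B\mid T]$ over $\bar\cF$. For each fixed $\bar f\in\bar\cF$ let $k=|\{s\in T:\bar f(s)=1\}|$; the bad event requires all $k$ such points to land in the ghost half, which under the uniform split happens with probability $\binom{n}{k}/\binom{2n}{k}\leq 2^{-k}$, and also requires $k\geq \gamma n/2$. Hence
\[
\Pr[B\mid T]\ \leq\ |\bar\cF|\cdot 2^{-\gamma n/2}\ \leq\ (2n)^{O(d\log n)}\cdot 2^{-\gamma n/2}.
\]
Choosing $\gamma=C(d\log^2 n+\log(1/\delta))/n$ for a sufficiently large absolute constant $C$ forces this bound below $\delta/2$, so that $\Pr[A]\leq\delta$ as required.

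The main obstacle is handling the partial-concept subtlety correctly: as the footnote warns, two-sided uniform convergence fails for partial concepts in general, so one cannot simply quote the classical VC uniform convergence theorem. What rescues us is that \Cref{lm:uc-partial} is one-sided---we only insist that $f(z_i)=0$ on the training sample, never that $f$ be defined there---and the disambiguation machinery of \citet{alon2022theory} is tailored to preserve exactly the $0/1$ values, letting the $\bot$ label disappear before the union bound. I would also need to check that the Chernoff step is in force, i.e.\ that the ultimate choice of $\gamma$ satisfies $\gamma n\gtrsim 1$, which is immediate from the form $\gamma=\Theta((d\log^2 n+\log(1/\delta))/n)$.
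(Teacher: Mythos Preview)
Your proposal is correct and follows essentially the same approach as the paper: double-sample symmetrization with a Chernoff step, conditioning on the $2n$ points, applying the quasipolynomial Sauer--Shelah--Perles disambiguation lemma of \citet{alon2022theory}, and finishing with a union bound over the disambiguated class. One small imprecision: a uniformly random size-$n$ split of $T$ is \emph{not} literally equivalent to independently swapping each pair $(z_i,z_{n+i})$, but since you then use the hypergeometric bound $\binom{n}{k}/\binom{2n}{k}\le 2^{-k}$ corresponding to the uniform split (which matches the paper's ``without replacement'' conditioning), the argument goes through unchanged.
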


We establish this uniform convergence bound using a standard symmetrization trick combined with a generalized Sauer-Shelah-Perles lemma for partial concept classes (\Cref{thm:sauer-partial}) by \citet{alon2022theory}. 
%
\begin{theorem}[Quasipolynomial Sauer-Shelah-Perles Lemma for Disambiguations of Partial Concepts \citep{alon2022theory}]
\label{thm:sauer-partial}
    Let $\cF$ be a class of partial concepts $f:Z\to \{0,1,\bot\}$ defined on an arbitrary domain $Z$ with VC dimension $d$. Let $S$ be a subset of $Z$ with size $|S| = n > 1$. Then there exists a class $\bar \cF$ of total concepts $\bar f:S\to \{0,1\}$ with size
    \[
    |\bar \cF| = n^{O(d\log n)}
    \]
    that satisfies the following. For every $f\in \cF$, there exists $\bar f\in \bar \cF$ such that for every $s\in S$ with $f(s)\in \{0,1\}$, it holds that $\bar f(s) = f(s)$.
\end{theorem}
\begin{proof}[Proof of \Cref{lm:uc-partial}]
We apply the symmetrization trick. Specifically, we independently draw another $n$ data points $z_1',\ldots,z_n'$ i.i.d.\ from $\cP$. By the multiplicative Chernoff bound, assuming $n > C/\gamma$ for a sufficiently large absolute constant $C>0$ (which can be guaranteed by the choice of $\gamma$ in \eqref{eq:uc-gamma}), for a fixed $f\in \cF$ satisfying $\Pr_{z\sim \cP}[f(z) = 1] > \gamma$, with probability at least $1/2$ we have
\[
|\{i\in [n]:f(z_i') = 1\}|> \gamma n /2.
\]
Therefore, to show \eqref{eq:uc-partial}, it suffices to show that
\begin{equation}
\label{eq:upper-4}
\Pr_{z_1,\ldots,z_n;z_1',\ldots,z_n'}[\exists f\in \cF \text{ s.t.\ } |\{i\in [n]:f(z_i') = 1\}|> \gamma n /2 \text{ and } \forall i\in [n],f(z_i) = 0]\le \delta / 2.
\end{equation}
The $2n$ data points $z_1,\ldots,z_n,z_1',\ldots,z_n'$ can be sampled equivalently as follows. We first draw $2n$ i.i.d.\ data points $r_1,\ldots,r_{2n}$ from $\cP$, choose $n$ data points from them uniformly at random without replacement as $z_1,\ldots,z_n$, and define the remaining $n$ data points as $z_1',\ldots,z_n'$. We define $\br:= (r_1,\ldots,r_{2n}), \bz:= (z_1,\ldots,z_n), \bz':=(z_1',\ldots,z_n')$.

For a fixed $\br$, by \Cref{thm:sauer-partial}, there exists a class $\bar \cF$ of total concepts $\bar f:\{r_1,\ldots,r_{2n}\}\to \{0,1\}$ with
\begin{equation}
\label{eq:upper-barF}
|\bar \cF| \le (2n)^{O(d\log (2n))}
\end{equation}
satisfying the following property: for every $f\in \cF$, there exists $\bar f\in \bar \cF$ such that for every $i\in [2n]$ with $f(r_i)\in \{0,1\}$, it holds that $\bar f(r_i) = f(r_i)$.
Therefore, to show \eqref{eq:upper-4}, it suffices to show that for every fixed choice of $\br$, we can bound the following conditional probability given $\br$:
\[
\Pr\nolimits_{\bz,\bz'}[\exists \bar f\in \bar \cF \text{ s.t.\ } |\{i\in [n]:\bar f(z_i') = 1\}|> \gamma n /2 \text{ and } \forall i\in [n],\bar f(z_i) = 0\ |\ \br] \le \delta/2,
\]
where the probability is over the random partitioning of $\br$ into $\bz$ and $\bz'$.
By the union bound, it suffices to show that for every $\bar f\in \bar \cF$,
\[
\Pr\nolimits_{\bz,\bz'}[|\{i\in [n]:\bar f(z_i') = 1\}|> \gamma n /2 \text{ and } \forall i\in [n],\bar f(z_i) = 0\ |\ \br] \le \delta/(2|\bar \cF|).
\]
Note that $|\{i\in [n]:\bar f(z_i') = 1\}|> \gamma n/2$ implies that among the $2n$ data points $r_1,\ldots,r_{2n}$, more than $\gamma / 4$ fraction of the data points $r_i$ satisfy $\bar f(r_i) = 1$. 
Conditioned on $\br$ satisfying this property, since $z_1,\ldots,z_n$ are chosen randomly without replacement from $r_1,\ldots,r_{2n}$,
the probability that all the $n$ data points $z_1,\ldots,z_n$ satisfy $\bar f(z_i) = 0$ is at most $(1 - \gamma / 4)^n$. Therefore, it suffices to prove that
\[
(1 - \gamma/4)^n \le \delta / (2|\bar \cF|).
\]
This holds when 
\[
\gamma \ge \frac Cn \log(|\bar \cF|/\delta)
\]
for a sufficiently large absolute constant $C > 0$.
By \eqref{eq:upper-barF}, the above inequality can be achieved by a choice of $\gamma$ satisfying \eqref{eq:uc-gamma}.
\end{proof}

\begin{remark}[Choice of the threshold $\tau$]\label{rmk:tau}
    It is important to note that while our upper bound depends on the choice of $\tau$ and $\alpha$, the algorithm itself does not. Let us fix the relationship $\tau^\star_{\cP,\cH} = \tau - \alpha - 2\epsilon$. As we increase $\tau$ and $\alpha$ at the same rate--i.e., relax the target error threshold $\tau$--the value of $\gdim(\cH, \cG, \tau, \alpha)$ decreases monotonically. This means that a larger proportion of domains can meet the relaxed threshold $\tau$. Therefore, the choice of $\tau$ captures a trade-off between the strictness of the generalization goal and the fraction of domains that are able to satisfy it.
\end{remark}
\begin{remark}[Beyond binary classification]
\label{remark:beyond-binary}
    We emphasize that neither the algorithm nor the analysis relies on binary labels, and both can be directly applied to other learning tasks such as multi-class classification and regression.
\end{remark}
\begin{remark}[Standard ERM fails] Note that standard ERM, which selects a hypothesis minimizing the empirical error over the entire pool of training data, may fail in our setting. Consider a toy example with two hypotheses: one incurs an error of $0.3$ on every domain, while the other achieves $0$ error on half of the domains and $0.5$ on the other half. Standard ERM would choose the latter, despite its poor worst-case performance across domains.

\end{remark}

\subsection{Lower Bound}
\label{sec:lb}
We prove a lower bound (\Cref{thm:lb})
showing that the error upper bound in \Cref{thm:ub} is essentially information-theoretically optimal up to an $O(\log ^2 n)$ factor. It would be ideal to show that the error bound is \emph{instance-wise} optimal: for every fixed choice of $\cH,\cG$ and parameters $\tau,\alpha,\varepsilon,\delta$, the error bound in \Cref{thm:ub} cannot be improved even if one uses a learning algorithm specifically designed for those fixed choices. However, this perfect instance-wise optimality does not hold for the following trivial reason.
For every domain $\cD\in \cG$, let $\cD_X$ denote the marginal distribution of $x\in X$ where $(x,y)\sim \cD$.
Consider a fixed choice of $\cG$ and suppose that the marginal distributions $\cD_X$ for $\cD\in \cG$ are supported on disjoint subsets of $X$. In this case, the learner can simply always output the hypothesis $h^\star$ that simultaneously minimizes the error on every domain $\cD\in \cG$. This solves our domain generalization task without observing any training domains drawn from $\cP$. The domain sample complexity is zero, but the domain shattering dimension can be very large, implying that our upper bound \Cref{thm:ub} is far from optimal in this setting.

We thus make a slight compromise in the level of instance-wise optimality. We still consider arbitrary fixed choices of $\cH$ and $\cG$. However, in our learning task, the domains do not solely come from $\cG$, but instead come from a mildly extended family $\cG'$. Importantly, for some domains $\cD$ in the extended family $\cG'$, their marginal distributions $\cD_X$ will be supported on overlapping subsets of $X$.
In this case, we are able to prove a domain sample complexity lower bound (\Cref{thm:lb}) that nearly matches our upper bound (\Cref{thm:ub}).


Concretly, we assume that there exists a distribution $\cD_0$ of data points $(x,y)\in X\times \{0,1\}$ such that $\err_{\cD_0}(h) = 0$ for every $h\in \cH$.\footnote{We view this as a mild assumption. Note that $\cD_0$ does not need to be a member of $\cG$. This assumption is satisfied as long as there is an input point $x_0\in X$ receiving the same label $h(x_0) = b\in \{0,1\}$ for all $h\in \cH$, in which case we choose $\cD_0$ as the degenerate distribution supported on the single point $(x_0,b)$.}
Let $d$ be the domain shattering dimension of $\cH$ on $\cG$, and let $\cD_1,\ldots,\cD_d\in \cG$ be $d$ domains shattered by $\cH$. For every $i = 1,\ldots,d$, we define a new domain $\cD_i'$ as follows:
\begin{equation}
\label{eq:flip}
\cD_i' = (1 - \lambda) \cD_0 + \lambda (\neg \cD_i).
\end{equation}
Here, $\lambda\in [0,1]$ is a fixed parameter, and $\neg \cD_i$ is obtained by flipping the labels in $\cD$. That is, $\neg \cD_i$ is the distribution of $(x,\neg y)$ for $(x,y)\sim \cD_i$. \Cref{eq:flip} defines $\cD_i'$ as a mixture of $\cD_0$ and $\neg \cD_i$.


We define $\cG'$ to be $\cG\cup\{\cD_0, \cD_1',\ldots, \cD_d'\}$. 
We view $\cG'$ as a mild extension of $\cG$. In particular, as we show in \Cref{thm:lb} below, $\cH$ has the same domain shattering dimension on $\cG'$ as on $\cG$ for an appropriate choice of $\lambda$.
Moreover, in \Cref{thm:lb} we show a domain sample complexity lower bound on $\cG'$ that nearly matches our upper bound in \Cref{thm:ub}. 



\begin{restatable}[Lower bound]{theorem}{lb}\label{thm:lb}
    Consider an arbitrary class $\cH$ of hypotheses $h:X\to \{0,1\}$ and an arbitrary family $\cG$ of domains $\cD$ (i.e.\ distributions over $X\times \{0,1\}$). Assume that $\gdim(\cH,\cG,\tau,\alpha) = d$ for some $\tau,\alpha\in \R$ and $d\in \bZ_{> 0}$, where $0 \le \alpha < \tau \le 1/2$. Let $\cD_0$ be a distribution of $(x,y)\in X\times \{0,1\}$ satisfying $\err_{\cD_0}(h) = 0$ for every $h\in \cH$. Let $\cD_1,\ldots,\cD_d\in \cG$ be $d$ domains shattered by $\cH$, and define $\cG' = \cG\cup\{\cD_0, \cD_1',\ldots,\cD_d'\}$ as in \eqref{eq:flip} for
    \begin{equation}
    \label{eq:lower-lambda}
    \lambda = \frac{\tau - \alpha}{1 - \tau}\in (0,1].
    \end{equation}
    Then $\gdim(\cH,\cG',\tau,\alpha) = d$.
    Moreover, for some $\gamma > 0,\delta\in (0,1/4),n \in \bZ_{>0}$, and an error threshold $\tau' < \tau- \frac{1 - \tau}{1-\alpha}\cdot \alpha \in (\tau-\alpha,\tau]$, suppose there is an algorithm $A$ with the following property on every distribution $\cP$ over $\cG'$ satisfying $\tau^\star_{\cP,\cH} < \tau - \alpha$: it takes $n$ domains drawn i.i.d.\ from $\cP$ as input, and with probability at least $1-\delta$, outputs a hypothesis $\hat h$ such that
    \[
    \Er_{\cP,\tau'}(\hat h) \le \gamma.
    \]
    Then
    \begin{equation}
    \label{eq:lower-gamma}
    \gamma = \Omega\left(\min\left\{1,\frac{d + \log (1/\delta)}{n}\right\}\right).
    \end{equation}
\end{restatable}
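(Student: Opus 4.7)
My plan is to derive a short dictionary of identities between $\cD_i$ and $\cD_i'$ and then use it to handle both the shattering identity and the sample-complexity lower bound. Because $\cD_i'$ is a $\lambda$-mixture of $\cD_0$ and $\neg \cD_i$ with $\lambda = (\tau - \alpha)/(1 - \tau)$, one has $\err_{\cD_i'}(h) = \lambda(1 - \err_{\cD_i}(h))$ for every $h$, from which follow (a) $\err_{\cD_i'}(h) < \tau - \alpha$ iff $\err_{\cD_i}(h) > \tau$; (b) $\err_{\cD_i'}(h) > \tau$ implies $\err_{\cD_i}(h) < \tau - \alpha$ whenever $\tau \le 1/2$, a one-line check reducing to $t(1-t) \le \tau(1-\tau)$ for $t = \tau - \alpha$; and (c) the stated bound $\tau' < \tau - \alpha(1-\tau)/(1-\alpha)$ simplifies algebraically to $\tau' < (\tau-\alpha)/(1-\alpha) = \lambda/(1+\lambda)$, equivalently $\lambda(1-\tau') > \tau'$, so no single $\hat h$ can satisfy both $\err_{\cD_i}(\hat h) \le \tau'$ and $\err_{\cD_i'}(\hat h) \le \tau'$ simultaneously.

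For the shattering identity, the bound $\gdim(\cH, \cG', \tau, \alpha) \ge d$ is immediate from $\cG \subseteq \cG'$. For the reverse, I take any $\alpha$-shattered $S' \subseteq \cG'$ at $\tau$ and argue (i) $\cD_0 \notin S'$, since $\err_{\cD_0}(h) \equiv 0$ rules out the partition placing $\cD_0$ on the ``high'' side; (ii) $S'$ cannot contain both $\cD_i$ and $\cD_i'$ for any $i$, because the ``both-low'' partition contradicts (a) and the ``both-high'' partition contradicts (b); and (iii) replacing each $\cD_i' \in S'$ by $\cD_i$ and complementing its would-be membership in every candidate shattering subset yields a set $S \subseteq \cG$ still $\alpha$-shattered by $\cH$, as (a) and (b) convert each swapped-index constraint into the correct form. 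Hence $|S'| = |S| \le d$.

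For the sample-complexity lower bound I combine two independent lower bounds whose sum dominates the claimed $\Omega(\min\{1, (d + \log(1/\delta))/n\})$ (the $\min$-with-$1$ is handled automatically since each contribution individually becomes $\Omega(1)$ at its own scale). The first, $\Omega(\min\{1, d/n\})$, is a multi-point argument: for $E \sim \Unif(\{0,1\}^d)$, let $\cP_E^{(p)}$ place mass $1-p$ on $\cD_0$ and mass $p/d$ on $\cD_i$ for $i \in E$ and on $\cD_i'$ for $i \notin E$, with $p = \min\{1, d/n\}$. The shattering witness $h_E$ certifies $\tau^\star_{\cP_E^{(p)}, \cH} < \tau - \alpha$, and by (c) any output $\hat h$ induces a ternary decoding $\hat E \in \{0,1,\ast\}^d$ for which $\Er_{\cP_E^{(p)}, \tau'}(\hat h) = (p/d)|\{i : \hat E(i) \ne E(i)\}|$. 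Since $E(i)$ is uniform and independent of the samples at every unobserved index, a coupon-collector / Chernoff estimate shows that $\Omega(d)$ indices remain unobserved with constant probability once $np = O(d)$, producing at least $\Omega(d)$ Hamming errors with constant probability, and converting via the success condition $p|\hat E \triangle E|/d \le \gamma$ gives $\gamma = \Omega(p) = \Omega(\min\{1,d/n\})$.

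The second bound, $\Omega(\log(1/\delta)/n)$, is a Le~Cam two-point test between $\cP_a = (1-q)\cD_0 + q \cD_1$ and $\cP_b = (1-q)\cD_0 + q\cD_1'$ with $q = 2\gamma$; the shattering hypotheses $h_{\{1\}}, h_\emptyset$ certify $\tau^\star < \tau - \alpha$ for both. Since $q > \gamma$, success on $\cP_a$ (resp.\ $\cP_b$) forces $\err_{\cD_1}(\hat h) \le \tau'$ (resp.\ $\err_{\cD_1'}(\hat h) \le \tau'$), which by (c) are incompatible events in $\hat h$. Because $\cP_a^n$ and $\cP_b^n$ agree conditional on drawing only $\cD_0$, their total variation is at most $1 - (1-q)^n$; demanding success probability $\ge 1 - \delta$ on both therefore forces $(1-q)^n \le 2\delta$ and hence $\gamma = \Omega(\log(1/\delta)/n)$. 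The main obstacle I anticipate is Part~1, especially the case analysis at $\alpha = 0$ where the strict inequalities in (a) and (b) degenerate, and checking that the swap-and-complement construction for $S$ preserves shattering with respect to the \emph{original} parameters $(\tau, \alpha)$; once the dictionary is in hand, Part~2 reduces to a standard combination of multi-point and two-point lower bounds.
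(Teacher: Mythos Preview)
Your approach is correct and, for Part~1 (the shattering identity), essentially identical to the paper's: both derive the dictionary $\err_{\cD_i'}(h) = \lambda(1 - \err_{\cD_i}(h))$ for $h\in\cH$, deduce (a) and (b), rule out $\cD_0$ and the pair $\{\cD_i,\cD_i'\}$, and swap-and-complement. One small imprecision: you assert the equality ``for every $h$'', but the hypothesis $\err_{\cD_0}(h)=0$ is only given for $h\in\cH$; for an arbitrary output $\hat h$ you have only the inequality $\err_{\cD_i'}(\hat h)\ge \lambda(1-\err_{\cD_i}(\hat h))$. This is harmless, since (c) needs only that inequality, and (a)--(b) are applied to $h\in\cH$. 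Your worry about $\alpha=0$ is also unfounded: the strict inequalities in the shattering definition survive because the relations $1-\tau/\lambda = \tau$ and $1-(\tau-\alpha)/\lambda = \tau$ become equalities, not reversals.

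For Part~2 the route genuinely differs. The paper uses a \emph{single} family $\{\cP_b\}_{b\in\{0,1\}^d}$ with mass $4\gamma$ spread over $\bar\cD_1,\dots,\bar\cD_d$ and extracts \emph{both} the $d/n$ and $\log(1/\delta)/n$ terms from it: Markov on the number of observed indices gives $8\gamma n \ge d/2$, and the probability $(1-4\gamma)^n$ of observing no non-trivial domain gives the $\log(1/\delta)$ piece. You instead run two standard arguments in parallel, a multi-point construction at mass $p=\min\{1,d/n\}$ and a separate Le~Cam two-point test at mass $2\gamma$. Both are valid; the paper's unified construction is a bit slicker (one family, two inequalities), while yours is more modular and recognizable as textbook lower-bound machinery. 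In your multi-point piece you should tune the constant in $p$ (e.g.\ $p=d/(cn)$) or invoke negative association so that the ``constant probability'' of $\Omega(d)$ unobserved indices and $\Omega(d)$ Hamming errors jointly exceeds $1/4$, matching the assumed $\delta<1/4$; this is routine but worth making explicit.
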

We remark that the lower bound in \Cref{thm:lb} is for a lower (i.e., more challenging) error threshold $\tau'$ instead of the original error threshold $\tau$ in the definition of the domain shattering dimension. Nonetheless, the lower threshold $\tau'$ is always allowed to be above the optimal error $\tau_{\cP,\cH}^\star$ (i.e.\ $\tau' > \tau - \alpha > \tau_{\cP,\cH}^\star$).

We prove \Cref{thm:lb} using the following helper claim:

\begin{claim}
\label{claim:error-flip}
    In the setting of \Cref{thm:lb}, for every $i = 1,\ldots,d$, and every hypothesis $h:X\to \{0,1\}$, we have
    \begin{align}
    \label{eq:error-flip}
    \err_{\cD_i'}(h) & \ge \lambda\, (1 - \err_{\cD_i}(h)),\\
    \max\{\err_{\cD_i'}(h), \err_{\cD_i}(h)\} & \ge \frac{\lambda}{1 + \lambda} = \tau - \frac{1 - \tau}{1 - \alpha} \cdot \alpha > \tau'. \label{eq:max-error}
    \end{align}
    Moreover, the inequality \eqref{eq:error-flip} becomes an equality when $h\in \cH$.
\end{claim}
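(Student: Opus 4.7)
The plan is to exploit the explicit mixture decomposition $\cD_i' = (1-\lambda)\cD_0 + \lambda(\neg \cD_i)$ directly at the level of error rates. First I would record the elementary identity $\err_{\neg \cD_i}(h) = 1 - \err_{\cD_i}(h)$ for every $h:X\to\{0,1\}$, which follows because $\neg \cD_i$ is the law of $(x,\neg y)$ for $(x,y)\sim \cD_i$, so a mistake under $\neg \cD_i$ is exactly a correct prediction under $\cD_i$. Combining this with linearity of the error under mixtures yields the exact expression $\err_{\cD_i'}(h) = (1-\lambda)\err_{\cD_0}(h) + \lambda\bigl(1 - \err_{\cD_i}(h)\bigr)$. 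Dropping the nonnegative first term gives \eqref{eq:error-flip}, and the equality case stated in the claim falls out for free: when $h\in\cH$ we have $\err_{\cD_0}(h)=0$ by construction of $\cD_0$, so the first term vanishes and \eqref{eq:error-flip} is tight.

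For \eqref{eq:max-error}, my plan is a short two-case split. If $\err_{\cD_i}(h) \ge \lambda/(1+\lambda)$, we are done immediately. Otherwise $\err_{\cD_i}(h) < \lambda/(1+\lambda)$, and feeding this strict bound into \eqref{eq:error-flip} gives $\err_{\cD_i'}(h) \ge \lambda\bigl(1 - \err_{\cD_i}(h)\bigr) > \lambda\bigl(1 - \tfrac{\lambda}{1+\lambda}\bigr) = \tfrac{\lambda}{1+\lambda}$, so the max still meets the threshold. What remains is a purely algebraic verification that $\lambda/(1+\lambda)$ equals the claimed expression. Using $\lambda = (\tau-\alpha)/(1-\tau)$ I would compute $1+\lambda = (1-\alpha)/(1-\tau)$, so $\lambda/(1+\lambda) = (\tau-\alpha)/(1-\alpha)$, and a one-line rearrangement rewrites this as $\tau - \tfrac{1-\tau}{1-\alpha}\alpha$. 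The strict inequality with $\tau'$ is then exactly the hypothesis $\tau' < \tau - \tfrac{1-\tau}{1-\alpha}\alpha$ appearing in the statement of \Cref{thm:lb}.

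There is no real obstacle here: the claim is essentially a bookkeeping exercise once the mixture structure is written out. The one subtlety worth flagging is that \eqref{eq:error-flip} must hold for \emph{arbitrary} $h:X\to\{0,1\}$ and not just for $h\in \cH$, because in the lower-bound argument the algorithm's output hypothesis is unconstrained. This is exactly why I would keep the nonnegative $(1-\lambda)\err_{\cD_0}(h)$ term visible before discarding it; specializing to $h\in \cH$ is then what promotes \eqref{eq:error-flip} from an inequality to the equality asserted in the final sentence of the claim, and this tightness is what will later make the construction $\cD_i'$ precisely as hard for the learner as a hard instance can be while still living in the mildly-extended family $\cG'$.
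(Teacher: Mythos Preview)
Your proposal is correct and follows essentially the same approach as the paper: expand $\err_{\cD_i'}(h)$ via the mixture formula, drop the nonnegative $(1-\lambda)\err_{\cD_0}(h)$ term (with equality for $h\in\cH$), and then do the case split on $\err_{\cD_i}(h)$ against $\lambda/(1+\lambda)$, finishing with the algebraic identity $\lambda/(1+\lambda)=(\tau-\alpha)/(1-\alpha)=\tau-\tfrac{1-\tau}{1-\alpha}\alpha$. Your write-up is slightly more explicit about the algebra than the paper's, but the argument is the same.
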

\begin{proof}
    By \eqref{eq:flip}, for any hypothesis $h$, we have
    \[
    \err_{\cD_i'}(h) = (1 - \lambda)\err_{\cD_0}(h) + \lambda ( 1- \err_{\cD_i}(h)).
    \]
    Inequality \eqref{eq:error-flip} follows immediately from the trivial fact that $\err_{\cD_0}(h)  \ge 0$. When $h\in \cH$, we have $\err_{\cD_0}(h) = 0$, so \eqref{eq:error-flip} becomes an equality.

    It remains to prove \eqref{eq:max-error}. Suppose $\err_{\cD_i}(h) < \frac{\lambda}{1 + \lambda}$. By \eqref{eq:error-flip},
    \[
    \err_{\cD_i'}(h) \ge \lambda \left(1 - \frac{\lambda}{1 + \lambda}\right) = \frac{\lambda}{1 + \lambda}.
    \]
    This proves the first inequality in \eqref{eq:max-error}. The equality in \eqref{eq:max-error} follows from the definition of $\lambda$ in \eqref{eq:lower-lambda}. The last inequality in \eqref{eq:max-error} is our assumption about $\tau'$ in \Cref{thm:lb}.
    \end{proof}


\begin{proof}[Proof of \Cref{thm:lb}]

We first prove that $\gdim(\cH,\cG',\tau,\alpha) = d$. By \Cref{claim:error-flip}, for any $\cD_i$ and the corresponding $\cD'_i$ in \eqref{eq:flip}, the errors of any hypothesis $h\in \cH$ on $\cD_i$ and $\cD_i'$ are related as follows:
\[
\err_{\cD_i}(h) = 1 - \err_{\cD_i'}(h)/\lambda.
\]
Therefore,
\begin{align}
\err_{\cD'_i}(h) < \tau - \alpha  & \Longrightarrow \err_{\cD_i}(h) > 1 - (\tau - \alpha)/\lambda = \tau,\notag\\
\err_{\cD'_i}(h) > \tau & \Longrightarrow \err_{\cD_i}(h) < 1 - \tau/\lambda \le 1 - (\tau - \alpha)/\lambda - \alpha = \tau -  \alpha,\label{eq:error-relation}
\end{align}
where we used the definition of $\lambda$ in \eqref{eq:lower-lambda}.
Consider any set $S\subseteq \cG'$ shattered by $\cH$. Clearly $\cD_0$ cannot appear in $S$ because $\err_{\cD_0}(h) = 0$ for every $h\in \cH$.
By \eqref{eq:error-relation}, $\cD_i$ and $\cD'_i$ cannot both belong to $S$, and replacing $\cD'_i$ with $\cD_i$ in $S$ still leads to a shattered set.  We can thus change $S$ into a shattered subset of $\cG$ without changing its size. This implies that the domain shattering dimension of $\cH$ on $\cG'$ does not exceed its domain shattering dimension on $\cG$. The reverse direction is trivial, so the two domain shattering dimensions are both equal to $d$.

Now we prove the lower bound \eqref{eq:lower-gamma} on $\gamma$. We can assume without loss of generality that $\gamma < 1/8$, because otherwise we already have $\gamma = \Omega(1)$ as needed to establish the theorem.
For every bit string $b\in \{0,1\}^d$, we define domains $\bar \cD_0,\ldots,\bar \cD_d$ as follows:
\begin{align}
    \bar \cD_0 &= \cD_0,\notag\\
    \bar \cD_i &= \begin{cases}
        \cD_i, & \text{if $b_i = 0$},\\
        \cD_i', & \text{if $b_i = 1$,}
    \end{cases}
    \quad \text{for $i = 1,\ldots,d$}.\label{eq:new-domain}
\end{align}
We define $\cP_b$ to be the distribution that puts $1-4\gamma$ probability mass on $\bar \cD_0$, and distributes the remaining $4\gamma$ probability mass uniformly on $\bar \cD_1,\ldots,\bar \cD_d$.

We first show that $\tau^\star_{\cP_b,\cH} < \tau - \alpha$.
Since $\cD_1,\ldots,\cD_d$ are shattered by $\cH$, there exists a hypothsis $h_b\in \cH$ such that for every $i = 1,\ldots,d$,
    \begin{align*}
        \err_{\cD_i}(h_b) & < \tau-\alpha, \hspace{-8em}&& \text{if $b_i = 0$},\\
         \err_{\cD_i}(h_b) & > \tau, \hspace{-8em}&& \text{if $b_i = 1$}.
    \end{align*}
    When $\err_{\cD_i}(h_b)>\tau$, by \Cref{claim:error-flip},
    \[\err_{\cD_i'}(h_b) < \lambda (1 - \tau) = \frac{\tau-\alpha}{1-\tau}\cdot (1-\tau) = \tau -  \alpha \,. \]
    Therefore, $\err_{\bar \cD_i}(h_b) < \tau - \alpha$ for every $i = 0,1,\ldots,d$. This completes the proof of our claim that $\tau^\star_{\cP_b,\cH} < \tau - \alpha$.

Now let us consider the following process. We first draw $b$ uniformly at random from $\{0,1\}^d$, and then draw $n$ i.i.d.\ domains from $\cP_b$ to form a training set $G$. 
Let $\hat h$ be the output of algorithm $A$ given $G$ as input.
With probability at least $1-\delta$, the output $\hat h$ satisfies
\begin{equation}
\label{eq:lower-algo}
\Pr_{\cD\sim \cP_b}[\err_\cD(\hat h) > \tau'] \le \gamma.
\end{equation}

Let $m$ be the number of domains among $\bar \cD_1,\ldots,\bar \cD_d$ that do not appear in the training set $S$. By \Cref{claim:error-flip}, conditioned on $m$, with probability at least $1/2$, at least $m/2$ of these domains $\bar \cD_i$ satisfy
\[
\err_{\bar \cD_i}(\hat h) > \tau'.
\]
That is, with probability at least $1/2$,
\[
\Pr_{\cD\sim\cP_b}[\err_{\cD}(\hat h) > \tau']\ge (m/2) \cdot (4\gamma/d) = (2m/d)\cdot \gamma.
\]
Therefore, conditioned on $m > d/2$, with probability at least $1/2$, \eqref{eq:lower-algo} does not hold. Since \eqref{eq:lower-algo} holds with probability at least $1-\delta$, we have
\[
\Pr[m > d/2] \le 2\delta,
\]
or equivalently,
\begin{equation}
\label{eq:lower-d-m-1}
\Pr[d - m < d/2] \le 2\delta.
\end{equation}
The nonnegative random variable $d - m$ is the number of domains among $\bar \cD_1,\ldots,\bar\cD_d$ that \emph{do} appear in the training set $S$. Since the training set $S$ is formed by $n$ i.i.d.\ examples drawn from $\cP_b$ which puts $4\gamma$ total probability mass on $\bar \cD_1,\ldots,\bar\cD_d$, we have $\E[d - m] \le 4\gamma n$. By Markov's inequality,
\begin{equation}
\label{eq:lower-d-m-2}
\Pr[d - m \le 8\gamma n]\ge 1/2.
\end{equation}

Combining \eqref{eq:lower-d-m-1} and \eqref{eq:lower-d-m-2} with our assumption that $\delta < 1/4$, we have
\[
8\gamma n \ge d/2,
\]
which implies that $\gamma \ge d/(16n)$.
Moreover, with probability $(1 - 4\gamma)^n$, the training set $S$ contains no domains among $\bar \cD_1,\ldots,\bar \cD_d$, giving $d - m = 0$. Therefore, by \eqref{eq:lower-d-m-1},
\[
2\delta \ge \Pr[d - m < d/2] \ge \Pr[d - m = 0] = (1 - 4\gamma)^n.
\]
This implies that $\gamma \ge \Omega(\log(1/\delta)/n)$. In summary,
\[
\gamma \ge \max\left\{\frac{d}{16n}, \Omega\left(\frac{\log(1/\delta)}{n}\right)\right\} = \Omega\left(\frac{d + \log(1/\delta)}{n}\right). \qedhere
\]

    
    
\end{proof}

\section{Relationship between Domain Shattering Dimension and VC Dimension}
In this section, we study the relationship between the VC dimension and the domain shattering dimension of a hypothesis class $\cH$. It is easy to see that the domain shattering dimension can be much smaller than the VC dimension when the domains in $\cG$ are similar---in the extreme case where $\cG$ contains only a single domain, the domain shattering dimension cannot be more than one. We thus focus on the other direction: can the domain shattering dimension of a hypothesis class exceed its VC dimension? How much larger can it be?

We give an accurate answer to this question in \Cref{lm:large-k,lm:vc-lowerb} below: for an error margin $\alpha$ (see \Cref{def:gdim}), the domain shattering dimension can be as large as $\Omega(d\log (1/\alpha))$, which is also the largest possible (up to a constant factor).
Therefore, the domain shattering dimension can be arbitrarily larger than the VC dimension as $\alpha$ approaches zero, but for any fixed $\alpha > 0$, the domain shattering dimension is upper bounded linearly by the VC dimension. This implies that a PAC learnable hypothesis class is also learnable for our domain generalization task in \Cref{def:domain-learnability}.


\begin{restatable}{theorem}{largek}\label{lm:large-k}
For every positive integer $d$,
there exists a hypothesis class $\cH$ with $\vcd(\cH) = d$ satisfying the following property. For any $\alpha\in (0,1/12)$, there exist $k = \Omega(d\log (1/\alpha))$ domains $\cD_1,\ldots,\cD_k$ such that
\[
\gdim(\cH,\{\cD_1,\ldots,\cD_k\},0.3, \alpha) = k.
\] 
\end{restatable}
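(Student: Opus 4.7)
The plan is to prove the theorem in two stages: first handle the base case $d=1$ by building a VC-dimension-$1$ hypothesis class together with $k_0=\Omega(\log(1/\alpha))$ domains that get $\alpha$-shattered at $\tau=0.3$; then lift to arbitrary $d$ by assembling $d$ independent copies of the base construction on disjoint ``slices'' of the input space, giving $k=d\cdot k_0=\Omega(d\log(1/\alpha))$.

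For the base case, let $\cH=\{h_t(x)=\mathbf{1}[x\ge t]:t\in\R\}$, so $\vcd(\cH)=1$. Choose $k_0:=\lfloor\log_2(0.3/\alpha)\rfloor$ and $N=2^{k_0}$. Partition $[0,1]$ into $N$ equal-length subintervals indexed by $j\in\{0,\ldots,N-1\}$ with boundary points $q_j=j/N$, and write $b_i(j)\in\{0,1\}$ for the $i$-th bit of $j$. The intent is to design $\cD_i$ so that $\err_{\cD_i}(h_t)$, when $t$ lies in the subinterval indexed by $j$, equals $0.3-(\alpha+\varepsilon)$ if $b_i(j)=1$ and $0.3+(\alpha+\varepsilon)$ otherwise, for a tiny $\varepsilon>0$. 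Then for any $E\subseteq[k_0]$, taking $t_E$ in the subinterval whose bit pattern is the indicator of $E$ makes $\err_{\cD_i}(h_{t_E})<0.3-\alpha$ iff $i\in E$ and $>0.3$ iff $i\notin E$, which is exactly the shattering condition.

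To realize this target, $\cD_i$ places mass $2(\alpha+\varepsilon)$ at every boundary $q_j$ where bit $i$ flips from $j-1$ to $j$; the label at $q_j$ is $0$ if the flip is $0\to 1$ (so crossing $q_j$ decreases $\err_{\cD_i}(h_t)$ by exactly $2(\alpha+\varepsilon)$) and $1$ if it is $1\to 0$. I complete $\cD_i$ to a probability distribution by adding filler mass at two points outside $[0,1]$ (say $x=-1$ and $x=2$, both labeled $0$) chosen so the total mass is $1$ and the baseline error over subinterval $j=0$ equals $0.3+(\alpha+\varepsilon)$. Bit $i$ flips $2^{k_0-i+1}-1$ times, so the total flip mass of $\cD_i$ is $O(2^{k_0}\alpha)=O(1)$; the bound $\alpha<1/12$ together with the choice of $k_0$ keeps this below a small constant, which guarantees both filler masses are nonnegative. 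This completes the base case with $k_0=\Omega(\log(1/\alpha))$.

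For general $d$, set $X=\{1,\ldots,d\}\times[0,1]$ and $\cH=\{h_{(t_1,\ldots,t_d)}(i,x)=\mathbf{1}[x\ge t_i]:t_1,\ldots,t_d\in\R\}$. Shattering the $d$ points $(i,\tfrac12)$, one per slice, gives $\vcd(\cH)\ge d$; conversely, any $d+1$ points contain two in the same slice, which a single threshold cannot shatter, so $\vcd(\cH)=d$. For each slice $i\in[d]$, place the $k_0$ domains $\cD_{i,1},\ldots,\cD_{i,k_0}$ from the base construction, supported only on $\{i\}\times[0,1]$. Since $\err_{\cD_{i,j}}(h_{(t_1,\ldots,t_d)})$ depends only on $t_i$, for any $E\subseteq[d]\times[k_0]$ one can independently pick each $t_i$ using the base-case shattering to realize $\{j:(i,j)\in E\}$. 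This shatters the resulting $d\cdot k_0=\Omega(d\log(1/\alpha))$ domains, as required. The main technical obstacle is the bookkeeping in the base construction---verifying that all flip and filler masses are nonnegative, sum exactly to one, and produce the correct baseline and oscillation pattern across all $2^{k_0}$ subintervals---and this is precisely where $\alpha<1/12$ provides the slack needed to make the arithmetic go through.
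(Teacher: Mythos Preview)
Your proposal is correct and follows essentially the same approach as the paper: a base case $d=1$ built from one-dimensional thresholds together with domains whose error pattern encodes the $i$-th bit of the threshold's subinterval index, followed by a $d$-fold product on disjoint slices to reach general $d$. The only cosmetic difference is in how each base-case domain is realized---the paper isolates a clean helper lemma (placing half the mass at a single anchor point and the rest uniformly on the block boundaries with alternating labels, yielding errors $0.3\pm\tfrac{1}{4m}$), whereas you place mass $2(\alpha+\varepsilon)$ at each bit-flip boundary plus two filler atoms; both constructions achieve the same oscillation and both rely on the same $\alpha<1/12$ slack to keep all masses nonnegative.
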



\begin{restatable}{theorem}{vclb}\label{lm:vc-lowerb}
Let $\cH$ be an arbitrary hypothesis class with VC dimension $d$.
For any set $\cG$ of domains, any threshold $\tau\in \R$, and any margin $\alpha\in (0,1/2)$,
\begin{equation}
\label{eq:vc-lowerb}
\gdim(\cH, \cG, \tau, \alpha) = O(d\log (1/\alpha)).
\end{equation}
\end{restatable}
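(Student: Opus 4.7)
The plan is to combine a sampling argument with the classical Sauer-Shelah lemma applied to the (total) class $\cH$ itself. Set $k := \gdim(\cH, \cG, \tau, \alpha)$ and fix $\alpha$-shattered domains $\cD_1, \ldots, \cD_k \in \cG$ together with witness hypotheses $h_E \in \cH$ for each $E \subseteq [k]$; by the definition of $\alpha$-shattering, $\err_{\cD_i}(h_E) < \tau - \alpha$ when $i \in E$ and $\err_{\cD_i}(h_E) > \tau$ when $i \notin E$.

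First I would draw $m$ i.i.d.\ labeled examples from each $\cD_i$ to form samples $T_1, \ldots, T_k$, each of size $m$, and let $T := T_1 \cup \cdots \cup T_k$, so $|T| \le km$. Hoeffding's inequality combined with a union bound over the $k \cdot 2^k$ pairs $(i, E) \in [k] \times 2^{[k]}$ shows that taking $m$ on the order of $k/\alpha^2$ (with a sufficiently large absolute constant) pushes the total failure probability strictly below $1$; hence there is a realization of $T$ on which $|\hat{\err}_{T_i}(h_E) - \err_{\cD_i}(h_E)| \le \alpha/3$ holds simultaneously for every such pair. I would fix one such good sample.

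Next I would argue that the $2^k$ restrictions $\{h_E|_T\}_{E \subseteq [k]}$ are pairwise distinct as functions on $T$. For $E \ne E'$, pick any $i$ in their symmetric difference and assume WLOG $i \in E \setminus E'$; the shattering property gives $\err_{\cD_i}(h_{E'}) - \err_{\cD_i}(h_E) > \alpha$, and the concentration event above then yields $\hat{\err}_{T_i}(h_{E'}) - \hat{\err}_{T_i}(h_E) > \alpha/3 > 0$. Since the empirical error on $T_i$ depends only on how $h$ labels the points of $T_i$, this forces $h_E|_{T_i} \ne h_{E'}|_{T_i}$. Applying the classical Sauer-Shelah lemma to $\cH$ on the finite set $T$ bounds the number of distinct restrictions by $(e|T|/d)^d$, giving $2^k \le (e \cdot km / d)^d$.

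Substituting $m = \Theta(k/\alpha^2)$ and taking $\log_2$ reduces the task to extracting the clean bound $k = O(d \log(1/\alpha))$ from an implicit inequality of the form $k \le C d \log(k/\alpha)$. The main obstacle will be this final algebraic step: the union bound over $2^k$ witness hypotheses forces $m$ to grow linearly in $k$, which introduces a self-referential $d \log k$ term that must be absorbed into the target bound. I would handle it by a short case split: either $k$ is already $O(d \log(1/\alpha))$ and we are done, or the inequality forces $k/\log k = O(d)$ and hence $k = O(d \log d)$, which is subsumed by $O(d \log(1/\alpha))$ in the regime $\alpha \le 1/\poly(d)$ where the matching lower bound (\Cref{lm:large-k}) operates.
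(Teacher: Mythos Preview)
Your approach is essentially identical to the paper's: draw $m = \Theta(k/\alpha^2)$ samples per domain (Hoeffding plus a union bound over the $2^k$ witnesses), argue that the $2^k$ restrictions to the pooled sample $T$ are pairwise distinct, and apply Sauer--Shelah to get $2^k \le (e|T|/d)^d$ with $|T| = O(k^2/\alpha^2)$. The paper does exactly this.

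The genuine gap is in your last paragraph. Your case split does not prove the theorem: in the second branch you obtain $k = O(d\log d)$ and then claim this is ``subsumed by $O(d\log(1/\alpha))$ in the regime $\alpha \le 1/\poly(d)$.'' But the statement is for \emph{all} $\alpha \in (0,1/2)$; for constant $\alpha$ the bound $O(d\log d)$ is strictly weaker than the claimed $O(d)$, and appealing to the parameter regime in which the lower bound of \Cref{lm:large-k} is proved is irrelevant to establishing an upper bound. As written, what your argument rigorously yields is only $k = O(d\log(d/\alpha))$. The paper instead casts the Sauer--Shelah inequality entirely in the variable $x = k/d$, asserting $x \le O(\log(1/\alpha) + \log(2+x))$, from which $x = O(\log(1/\alpha))$ follows by the elementary fact that $x \le B + C\log(2+x)$ forces $x = O(B+1)$. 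You should try to put your inequality in this form and solve the self-reference that way, rather than case-splitting on $k$ versus $d\log(1/\alpha)$.
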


We prove these two theorems in the two subsections below. Our proof of \Cref{lm:vc-lowerb} uses a dimension reduction argument combined with the standard Sauer-Shelah-Perles lemma, which is inspired by  the proof of a classic covering number upper bound in terms of the VC dimension \citep[see e.g.][Theorem 8.3.18]{vershynin2018high}.

\subsection{Proof of \Cref{lm:large-k}}
\label{sec:proof-large-k}
\largek*
We need the following helper lemma.
\begin{lemma}
\label{lm:odd-even}
    For an odd positive integer $m$, on domain $X = \{0,1,\ldots,m\}$, consider the hypothesis class $\cH = \{h_1,\ldots,h_m\}$ where $h_i(x) = \ind[x\ge i]$ for every $i = 1,\ldots,m$ and $x\in X$. There exists a distribution $\cD$ of $(x,y)\in X\times\{0,1\}$ such that
    \[
    \err_\cD(h_i) = \begin{cases}
    0.3 - \frac 1{4m},& \text{if $i$ is odd};\\
    0.3 + \frac 1{4m}, & \text{if $i$ is even}.
    \end{cases}
    \]
\end{lemma}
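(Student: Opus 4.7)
The plan is to construct $\cD$ explicitly by placing mass on a carefully chosen subset of points, then verify the error rates directly. The key observation driving the construction is the telescoping identity
\[
\err_\cD(h_{i+1}) - \err_\cD(h_i) = p_i^1 - p_i^0
\]
for every $i \in \{1,\ldots,m-1\}$, where $p_x^y := \cD(\{(x,y)\})$. This holds because $h_i$ and $h_{i+1}$ agree everywhere except at $x=i$, where flipping the prediction from $1$ to $0$ swaps which of $(i,0)$ and $(i,1)$ is misclassified. To obtain errors alternating between $0.3 \mp \frac{1}{4m}$ with gap $\frac{1}{2m}$, it therefore suffices to arrange $p_i^1 - p_i^0 = \frac{(-1)^{i+1}}{2m}$ for each $i \in [m-1]$ and to calibrate $\err_\cD(h_1)$ to the base value.

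I would realize this in the simplest way: put mass $\frac{1}{2m}$ at $(i,1)$ for every odd $i \in [m-1]$, and mass $\frac{1}{2m}$ at $(i,0)$ for every even $i \in [m-1]$. Because $m-1$ is even, this uses total mass $\frac{m-1}{2m}$, with exactly $(m-1)/2$ middle points carrying label $0$. The remaining mass $\frac{m+1}{2m}$ is placed at the endpoints $x=0$ and $x=m$, which do not appear in any of the difference identities above, so they can be used purely to tune $\err_\cD(h_1)$. A direct count gives
\[
\err_\cD(h_1) = p_0^1 + \frac{m-1}{4m} + p_m^0,
\]
and a comparison with the target $0.3 - \frac{1}{4m}$ shows we need $p_0^1 + p_m^0 = \frac{1}{20}$. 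I would then take $p_0^1 = \frac{1}{20}$, $p_m^0 = 0$, and assign the leftover endpoint mass to $(0,0)$, which is manifestly nonnegative for every $m \ge 1$.

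Once the base case $\err_\cD(h_1) = 0.3 - \frac{1}{4m}$ holds, the alternating-difference identity propagates the correct errors through $h_2,\ldots,h_m$; the parity of $m$ (an even number of sign flips) ensures consistency at $h_m$. I do not expect any real obstacle beyond bookkeeping: once the difference identity is spotted, the construction is essentially forced, and the only nontrivial observation is that the required endpoint adjustment $p_0^1 + p_m^0 = \frac{1}{20}$ is independent of $m$, which lets the same recipe work uniformly across all odd $m$.
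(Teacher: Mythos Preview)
Your proposal is correct and arrives at essentially the same distribution as the paper: mass $1/(2m)$ at $(i,1)$ for odd $i$ and at $(i,0)$ for even $i$, together with $p_0^1 = 0.05$ and the leftover mass at $(0,0)$. The only cosmetic difference is that the paper also places $1/(2m)$ at $(m,1)$ rather than at $(0,0)$, which is immaterial since neither point is ever misclassified by any $h_i$; and the paper verifies the errors by a direct parity-split computation rather than via your telescoping identity, but this is just a different bookkeeping style for the same construction.
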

\begin{proof}
We construct the distribution $\cD$ of $(x,y)$ as follows. With probability $1/2$, we choose $x = 0$. With the remaining probability $1/2$, we choose $x$ uniformly at random from $1,\ldots,m$. That is, $\Pr[x = i] = 1/(2m)$ for every $i = 1,\ldots,m$. Conditioned on $x = 0$, we draw $y\in \{0,1\}$ such that $\Pr[y = 1|x = 0] = 0.1$. Conditioned on $x\ne 0$, we choose $y = 1$ (deterministically) if $x$ is odd, and $y = 0$ if $x$ is even.

For odd $i\in \{1,\ldots,m\}$, we have
\begin{align*}
\err_\cD(h_i) & = \frac 12 \cdot 0.1 + \frac 1{2m}\sum_{x = 1}^m \ind[h_i(x)\ne \ind[\text{$x$ is odd}]]\\ & = 0.05 + \frac 1{2m}\sum_{x = 1}^m(\ind[\text{$x < i$ and $x$ is odd}] + \ind[\text{$x \ge i$ and $x$ is even}])\\
& = 0.05 + \frac 1{2m}\left(\frac{i - 1}2 + \frac{m-i}2\right)\\
&= 0.3 - \frac{1}{4m}.
\end{align*}
Similarly, for even $i \in \{1,\ldots,m\}$, we have
\begin{align*}
\err_\cD(h_i) & = 0.05 + \frac 1{2m}\sum_{x = 1}^m(\ind[\text{$x < i$ and $x$ is odd}] + \ind[\text{$x \ge i$ and $x$ is even}])\\
& = 0.05 + \frac 1{2m}\left(\frac{i}2 + \frac{m-i + 1}2\right)\\
&= 0.3 + \frac{1}{4m}.\qedhere
\end{align*}
\end{proof}
\begin{proof}[Proof of \Cref{lm:large-k}]
We first prove the theorem in the special case where $d = 1$.
    On domain $X := \bZ_{\ge 0} = \{0,1,\ldots\}$, consider the hypothesis class $\cH = \{h_1,h_2,\ldots \}$, where $h_i(x) = \ind[x \ge i]$ for every $i = 1,2,\ldots$ and $x\in X$. Clearly, $\vcd(\cH) = 1$.

    Let $k$ be the largest integer satisfying $2^{k + 2} + 4 < 1/\alpha$. By our assumption that $\alpha \in (0,1/12)$, we have $k = \Omega(\log (1/\alpha))$. Our goal is to construct $k$ distributions $\cD_1,\ldots,\cD_k$ that are shattered by $h_1,\ldots,h_{K}$ for $K = 2^k$. Let $E_1,\ldots,E_K$ be all the $K$ subsets of $\{1,\ldots,k\}$. For every $j = 1,\ldots,k$, we will construct $\cD_j$ such that for every $i = 1,\ldots,K$,
    \begin{align}
        \hspace{10em}\err_{\cD_j}(h_i) & < 0.3  - \alpha, &&  \text{if $j\in E_i$;}\hspace{10em}\notag \\
        \hspace{10em}\err_{\cD_j}(h_i) & > 0.3, &&  \text{if $j \notin E_i$.} \hspace{10em}\label{eq:large-k-1}
    \end{align}
    This ensures that $\cD_1,\ldots,\cD_k$ are shattered by $h_1,\ldots,h_K$, as needed to prove the lemma. 
    
    It remains to show the construction of $\cD_j$ for every $j = 1,\ldots,k$. It will become convenient to choose $E_1$ to be the whole set $E_1 = \{1,\ldots,k\}$. This ensures that $j\in E_1$ for every $j= 1,\ldots,k$. For a fixed $j$, we partition $\{1,\ldots,K\}$ into consecutive non-empty blocks $S_1,\ldots,S_m$ for some $m \le K$. Each block $S_\ell$ has the form $S_\ell = \{x_{\ell - 1} + 1, x_{\ell-1} + 2, \ldots, x_{\ell}\}$, where $0 = x_0 < x_1 < \cdots < x_m = K$. We define these blocks $S_1,\ldots,S_m$ so that
    \begin{align}
    j\in E_i & \quad \text{for every odd $\ell \in\{1,\ldots,m\}$ and every $i\in S_\ell$;}\notag\\
    j\notin E_i & \quad \text{for every even $\ell \in\{1,\ldots,m\}$ and every $i\in S_\ell$.}\label{eq:large-k-2}
    \end{align}

    If $m$ is odd, we define $X' = \{x_0,x_1,\ldots,x_m\}$. If $m$ is even, we define $X' = \{x_0,x_1,\ldots,x_{m},K+1\}$. 
    By \Cref{lm:odd-even}, there exists a distribution $\cD_j$ over $X'\times\{0,1\}$ such that
    \begin{align}
    \hspace{5em}\err_{\cD_j}(h_{x_\ell}) & \le 0.3 - \frac 1{4(m + 1)} < 0.3 -  \alpha && \text{for odd $\ell\in \{1,\ldots,m\}$},\hspace{5em}\notag \\
    \err_{\cD_j}(h_{x_\ell}) & \ge 0.3 + \frac 1{4(m + 1)} > 0.3  &&  \text{for even $\ell\in \{1,\ldots,m\}$},\label{eq:large-k-4}    
    \end{align}
    where we used
    $
    4(m + 1) \le 4(K + 1) = 2^{k + 2} + 4 < 1 / \alpha.
    $
    Recall that for each $\ell = 1,\ldots,m$, the largest element in block $S_\ell$ is $x_\ell$, so for every $i\in S_\ell$,
    \[
    h_i(x) = h_{x_\ell}(x) \quad \text{for every $x\in X'$}.
    \]
    Therefore, \eqref{eq:large-k-4} implies
    \begin{align}
    \hspace{5em}\err_{\cD_j}(h_i) & < 0.3 -  \alpha  && \text{for every odd $\ell \in\{1,\ldots,m\}$ and every $i\in S_\ell$};\hspace{5em}\notag\\
    \err_{\cD_j}(h_i) & > 0.3   &&  \text{for every even $\ell \in\{1,\ldots,m\}$ and every $i\in S_\ell$}.
    \label{eq:large-k-3}
    \end{align}
    Combining \eqref{eq:large-k-2} and \eqref{eq:large-k-3} proves \eqref{eq:large-k-1}.

    Finally, we prove the theorem for a general positive integer $d$. On input space $X':=[d]\times X$, we construct a hypothesis class $\cH'$ using our hypothesis class $\cH$ above as follows:
    \[
    \cH' := \{h:X'\to \{0,1\}| \exists h_1,\ldots,h_d\in \cH \text{ s.t. }h'(i,x) = h_i(x) \text{ for every $(i,x)\in X'$}\}.
    \]
    This construction ensures that $\vcd(\cH') = d\cdot\vcd(\cH) = d$. By our analysis above, there exist $k = \Omega(\log(1/\alpha))$ domains $\cD_1,\ldots,\cD_{k}$ on $X$ that are shattered by $\cH$. Now for every $i\in [d]$ and $j\in [k]$, we define $\cD_{i,j}$ to be the distribution of $(i,x)\in X'$ with $x\sim \cD_j$. These $kd = \Omega(d\log(1/\alpha))$ domains $(\cD_{i,j})_{i\in [d],j\in [k]}$ are shattered by $\cH'$, completing the proof.
\end{proof}

\subsection{Proof of \Cref{lm:vc-lowerb}}
\label{sec:proof-vc-lowerb}
\vclb*
We will use the following classic result:
\begin{theorem}[Sauer-Shelah-Perles Lemma \citep{sauer,shelah}]
\label{thm:sauer}
    Let $\cF$ be a class of concepts $f:Z\to \{0,1\}$ defined on an arbitrary domain $Z$ with VC dimension $d$. Let $S$ be a subset of $Z$ with size $|S| = n\ge d$. Then
    \[
    |\cF_S| \le (en/d)^d,
    \]
    where $\cF_S$ is the restriction of $\cF$ to the subset $S\subseteq Z$.
\end{theorem}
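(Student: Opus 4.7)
The plan is to prove the classical Sauer-Shelah-Perles lemma in two stages: first establish the sharper combinatorial bound $|\cF_S|\le \sum_{i=0}^d \binom{n}{i}$, then convert this to the stated analytic bound $(en/d)^d$. As a preliminary reduction, since only the restriction to $S$ matters and $\cF_S$ inherits $\vcd(\cF_S)\le d$ from $\cF$, I may assume $Z=S$ and identify each $f\in\cF$ with the subset $\{s\in S:f(s)=1\}$.

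The core step is the combinatorial bound, which I would prove by induction on $n+d$, with trivial base cases when $n=0$ or $d=0$. Fix an arbitrary element $s_0\in S$ and form two derived families on the smaller ground set $S\setminus\{s_0\}$. Let $\cF_0$ be the image of $\cF$ under restriction to $S\setminus\{s_0\}$ (so duplicates collapse), and let $\cF_1$ be the set of restrictions $g:S\setminus\{s_0\}\to\{0,1\}$ for which \emph{both} extensions to $S$ lie in $\cF$. A direct count gives $|\cF|=|\cF_0|+|\cF_1|$, since every element of $\cF$ either has a unique partner differing only at $s_0$ (contributing one element of $\cF_1$ per such pair) or does not (contributing only to $\cF_0$). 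The family $\cF_0$ clearly has VC dimension at most $d$, while $\cF_1$ has VC dimension at most $d-1$: any subset $T\subseteq S\setminus\{s_0\}$ shattered by $\cF_1$ extends to a shattering of $T\cup\{s_0\}$ by $\cF$, so $|T|+1\le d$. Applying the inductive hypothesis to $\cF_0$ on the ground set of size $n-1$ with VC dimension $d$, and to $\cF_1$ on the same ground set with VC dimension $d-1$, and using Pascal's identity $\binom{n-1}{i-1}+\binom{n-1}{i}=\binom{n}{i}$, yields $|\cF|\le \sum_{i=0}^d \binom{n}{i}$.

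To obtain the stated analytic form, I use $n\ge d$, so $(d/n)^i\ge (d/n)^d$ for every $0\le i\le d$, and therefore
\[
\left(\frac{d}{n}\right)^d\sum_{i=0}^d\binom{n}{i}\le \sum_{i=0}^d\binom{n}{i}\left(\frac{d}{n}\right)^i\le \sum_{i=0}^n\binom{n}{i}\left(\frac{d}{n}\right)^i=\left(1+\frac{d}{n}\right)^n\le e^d,
\]
which rearranges to $|\cF_S|\le (en/d)^d$. The only nontrivial ingredient is the observation that the VC dimension of $\cF_1$ drops by one; everything else is routine induction, Pascal's identity, and the standard inequality $(1+d/n)^n\le e^d$, so I do not anticipate a serious obstacle.
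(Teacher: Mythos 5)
Your proof is correct: the paper does not prove this statement itself (it cites it as the classical Sauer--Shelah--Perles lemma), and your argument is the standard one — the decomposition $|\cF|=|\cF_0|+|\cF_1|$ with the VC dimension of $\cF_1$ dropping to $d-1$, induction via Pascal's identity to get $|\cF_S|\le\sum_{i=0}^d\binom{n}{i}$, and the routine estimate $(d/n)^d\sum_{i=0}^d\binom{n}{i}\le(1+d/n)^n\le e^d$ using $n\ge d$. All steps check out (with the usual trivial convention for the degenerate case $d=0$), so nothing further is needed.
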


\begin{proof}[Proof of \Cref{lm:vc-lowerb}]
    Let $\cD_1,\ldots,\cD_k\in \cG$ be $k$ domains shattered by $\cH$. By the definition of shattering, there exist $K = 2^k$ hypotheses $h_1,\ldots,h_K\in \cH$ with the following property: for every pair of hypotheses $h_{j_1},h_{j_2}$ with $1 \le j_1 < j_2 \le K$, there exists $i\in \{1,\ldots,k\}$ such that
    \begin{equation}
    \label{eq:error-separation}
    |\err_{\cD_i}(h_{j_1}) - \err_{\cD_i}(h_{j_2})| > \alpha.
    \end{equation}
    Let us consider a fixed domain $\cD_i$ for an arbitrary $i = 1,\ldots,k$. We show that for some $m = O(\alpha^{-2} k)$, there exists a dataset $S_i$ consisting of $m$ points $(x_1,y_1),\ldots,(x_m,y_m)$ such that
    \begin{equation}
    \label{eq:domain-representation}
    |\err_{\cD_i}(h_j) - \err_{S_i}(h_j)| < \alpha / 2 \quad \text{for every $j = 1,\ldots, K$},
    \end{equation}
    where $\err_{S_i}(h_j)$ is the empirical error on $S_i$:
    \[
    \err_{S_i}(h_j) = \frac{1}{m}\sum_{\ell = 1}^m\ind[y_\ell \ne h_j(x_\ell)].
    \]
    That is, $S_i$ is a representative dataset for distribution $\cD_i$ in terms of measuring the error of the $K$ shattering hypotheses.
    We prove the existence of $S_i$ by the probabilistic method. For i.i.d.\ points $(x_1,y_1),\ldots,(x_m,y_m)$ drawn from $\cD_i$, by the Chernoff bound, for a fixed $j$, \eqref{eq:domain-representation} holds with probability at least $1 - 1/(2K)$ as long as $m \ge C\alpha^{-2}k$ for a sufficiently large absolute constant $C > 0$. By the union bound over $j = 1,\ldots,K$, with probability at least $1/2$, the $m$ data points satisfy $\eqref{eq:domain-representation}$, proving the existence of $S_i$.

    We have now proved that for every $i = 1,\ldots,k$, there exists a size-$m$ dataset $S_i$ satisfying \eqref{eq:domain-representation}, where $m = O(\alpha^{-2}k)$. Combining this with \eqref{eq:error-separation}, we know that for any pair of hypotheses $h_{j_1},h_{j_2}$ with $1 \le j_1 < j_2 \le K$, there exists $i\in \{1,\ldots,k\}$ such that
    \[
    \err_{S_i}(h_{j_1})\ne \err_{S_i}(h_{j_2}).
    \]
    This implies that $h_{j_1}(x)\ne h_{j_2}(x)$ for some $(x,y)\in S_i \subseteq (S_1\cup \cdots \cup S_k)$. By \Cref{thm:sauer}, for
    \begin{equation}
    \label{eq:sauer-2}
    n := |S_1\cup \cdots \cup S_k| \le mk = O(\alpha^{-2}k^2),
    \end{equation}
    it holds that
    \begin{equation}
    \label{eq:sauer-1}
    2^k \le (2 + en/d)^d.
    \end{equation}
    Plugging \eqref{eq:sauer-2} into \eqref{eq:sauer-1} and taking the logarithm of both sides, we get
    \[
    k/d \le O(\log (1/\alpha) + \log (2 + k/d)).
    \]
    This implies $k/d = O(\log (1/\alpha))$, proving \eqref{eq:vc-lowerb}.
\end{proof}

\section{Connection to Domain Adaptation}
\label{sec: literature}

In the domain adaptation literature, a common setting assumes that the training data are drawn from a source distribution, while the test data come from a different but related target distribution. When the source and target distributions are sufficiently similar, a model trained on the source data can generalize well to the target distribution. To quantify this similarity, various notions have been proposed, such as the $\cH$-divergence \citep{ben2010theory} and the propensity scoring function class \citep{kim2022universal}.

Our work makes no assumptions about the inter-domain structure, except that the optimal domain error bound $\tau^\star_{\cP,\cH}$ is reasonably small. We then upper bound the domain generalization error using the domain shattering dimension. 
In this section, we show that when the covering number of the domain space $\cG$ w.r.t.\ the $\cH$-divergence is small, the domain shattering dimension is also small.
In fact, we show a stronger result (\Cref{thm:cover}) for a refined variant of the $\cH$-divergence.
%
This illustrates that our domain shattering dimension can capture the similarity between domains without requiring an explicit model of their relationships.

Inspired by the $\cH$-divergence introduced by \cite{ben2010theory}, where 
\[d_{\cH}(\cD,\cD') := \sup_{h\in \cH} |\err_{\cD}(h)-\err_{\cD'}(h)|\,,\]
we define a refined notion of divergence. Given $\cH, \tau$,  the $(\cH, \tau)$-divergence between two domains  $\cD,\cD'$ is
\[d_{\cH,\tau}(\cD,\cD') := \sup_{h\in \cH: \min\{\err_\cD(h), \err_{\cD'}(h))\}\leq\tau} |\err_{\cD}(h)-\err_{\cD'}(h)|\,.\]
Note that $d_{\cH,\tau}(\cD,\cD')\leq d_{\cH}(\cD,\cD')$ as we only take supremum over a subset of hypotheses $\{h\in \cH: \min\{\err_\cD(h), \err_{\cD'}(h))\}\leq\tau\}$.
Given domain space $\cG$, we say $\cG'$ is an $\alpha$-cover of $\cG$ w.r.t.\ $d_{\cH,\tau}$ if for every $\cD\in \cG$, there exists $\cD'\in \cG'$ such that $d_{\cH,\tau}(\cD,\cD')\leq \alpha$. Any $\alpha$-cover $\cG'$ w.r.t.\ $d_\cH$ is also an $\alpha$-cover w.r.t.\ $d_{\cH,\tau}$.

\begin{restatable}{theorem}{cover}\label{thm:cover}
    For every hypothesis class $\cH$, domain space $\cG$ and $\tau,\alpha \in (0,1)$, let $\cG'$ be an $\frac{\alpha}{2}$-cover of $\cG$ w.r.t.\ $d_{\cH,\tau}$. We have $\gdim(\cH, \cG,\tau,\alpha)\leq |\cG'|$.    
\end{restatable}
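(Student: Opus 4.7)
The plan is to prove the bound by exhibiting an injection from any $\alpha$-shattered set $S\subseteq\cG$ into the cover $\cG'$; this immediately gives $|S|\le|\cG'|$ and hence the required inequality on the domain shattering dimension.

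First I would fix an arbitrary set $S\subseteq\cG$ that is $\alpha$-shattered by $\cH$ at $\tau$. For each $\cD\in S$, use the $(\alpha/2)$-cover property to pick some $\phi(\cD)\in \cG'$ with $d_{\cH,\tau}(\cD,\phi(\cD))\le\alpha/2$. This defines a map $\phi: S\to \cG'$, and the goal is to show $\phi$ is injective.

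To show injectivity, suppose toward contradiction that $\cD_1\ne \cD_2$ in $S$ satisfy $\phi(\cD_1)=\phi(\cD_2)=:\cD'$. Apply the shattering definition with $E\subseteq S$ chosen so that $\cD_1\in E$ and $\cD_2\notin E$: we obtain $h=h_E\in\cH$ with $\err_{\cD_1}(h)<\tau-\alpha$ and $\err_{\cD_2}(h)>\tau$. The key observation is that $h$ lies in the supremum domain defining both $d_{\cH,\tau}(\cD_1,\cD')$ and $d_{\cH,\tau}(\cD_2,\cD')$, because in each pair one of the two errors is strictly less than $\tau$: $\err_{\cD_1}(h)<\tau-\alpha<\tau$ handles the first, and once we deduce $\err_{\cD'}(h)<\tau-\alpha/2<\tau$ from the cover inequality $|\err_{\cD_1}(h)-\err_{\cD'}(h)|\le\alpha/2$, it also handles the second. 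Applying the cover inequality to the pair $(\cD_2,\cD')$ then forces $\err_{\cD'}(h)>\tau-\alpha/2$, contradicting the bound $\err_{\cD'}(h)<\tau-\alpha/2$ derived from $(\cD_1,\cD')$.

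The main subtlety, and really the only nontrivial part of the argument, is verifying that the witnessing hypothesis $h_E$ actually belongs to the restricted supremum set defining $d_{\cH,\tau}$, since this divergence only controls pairs of errors when at least one is at most $\tau$. The shattering inequalities $\err_{\cD_1}(h_E)<\tau-\alpha$ and $\err_{\cD_2}(h_E)>\tau$ are exactly what make the two-step triangle-like argument work: the first automatically qualifies $h_E$ for the $(\cD_1,\cD')$-sup, and it then certifies $\err_{\cD'}(h_E)<\tau$, which in turn qualifies $h_E$ for the $(\cD_2,\cD')$-sup. Once both cover inequalities $|\err_{\cD_i}(h_E)-\err_{\cD'}(h_E)|\le\alpha/2$ are legitimately available, the two-sided squeeze on $\err_{\cD'}(h_E)$ produces the contradiction. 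This establishes injectivity of $\phi$, so $|S|\le|\cG'|$, and taking the supremum over shattered $S$ yields $\gdim(\cH,\cG,\tau,\alpha)\le|\cG'|$.
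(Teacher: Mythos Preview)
Your proof is correct and follows essentially the same approach as the paper: both show that two domains covered by the same $\cD'\in\cG'$ cannot simultaneously appear in a shattered set, via the two-step chain $\err_{\cD_1}(h)<\tau-\alpha\Rightarrow\err_{\cD'}(h)<\tau-\alpha/2\Rightarrow\err_{\cD_2}(h)\le\tau$. You frame this as injectivity of a map $\phi:S\to\cG'$ whereas the paper partitions $\cG$ by nearest cover element, and you are more explicit than the paper in verifying that $h_E$ lies in the restricted supremum defining $d_{\cH,\tau}$ at each step.
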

\begin{proof}[Proof of \Cref{thm:cover}]
For every $\cD'\in \cG'$, define $\cG(\cD'):= \{\cD\in \cG:d_{\cH,\tau}(\cD, \cD') \leq \frac{\alpha}{2}\}$. Since $\cG'$ is an $\frac \alpha 2$-cover, the union of $\cG(\cD')$ over $\cD'\in \cG'$ is $\cG$.

Fix an arbitrary $\cD'\in \cG'$ and let $\cD_1,\cD_2$ be two domains in $\cG(\cD')$. 
We show that $\cD_1,\cD_2$ cannot be $\alpha$-shattered  at $\tau$. In particular, we show that for every $h\in \cH$, the two conditions $\err_{\cD_1}(h) < \tau - \alpha$ and $\err_{\cD_2}(h) > \tau$ cannot hold simultaneously. Indeed, if $\err_{\cD_1}(h) < \tau - \alpha$, then we have
\[
\err_{\cD'}(h) \leq \err_{\cD_1}(h) + \frac{\alpha}{2} < \tau - \frac{\alpha}{2},
\]
and consequently,
\[
\err_{\cD_2}(h) \leq \err_{\cD'}(h) + \frac{\alpha}{2} < \tau.
\]
Therefore, $\err_{\cD_2}(h) > \tau$ cannot hold. 

We have now shown that every subset of domains that is $\alpha$-shattered at $\tau$ contains at most one domain from each $\cG(\cD')$ for $\cD'\in \cG'$. Thus the domain shattering dimension $\gdim(\cH, \cG, \tau, \alpha)$ does not exceed $|\cG'|$.
%
%
%
%
\end{proof}
When all the domains are similar, the cover size is $1$, then sampling one domain is sufficient. Below are two examples of domain spaces with cover size $1$.

\begin{example}[Small $\cH$-divergences]
For any domain space $\cG$, if for every pair of distributions $\cD,\cD'\in \cG$, $d_{\cH}(\cD,\cD')\leq \alpha$. Then the $\alpha$-cover size of $\cG$ is $1$.
\end{example}
\begin{restatable}[Smooth Distributions]{example}{smooth}
Given a marginal data distribution $\mu$ and a labeling function $p^\star$, we define $\cD_{\mu, p^\star}$ as the distribution over labeled data where features are drawn from $\mu$ and labels are assigned according to $p^\star$. For a parameter $\gamma \in (0,1)$, a labeling function $p^\star$, and a base marginal distribution $\mu_0$, define the smooth domain space as
\[
\cG_{p^\star, \mu_0, \gamma} = \left\{ \cD_{\mu, p^\star} \,\middle|\, \frac{\mu(x)}{\mu_0(x)} \in [\gamma, \tfrac{1}{\gamma}], \ \forall x \in \text{supp}(\mu_0) \right\}.
\]
For domain space $\cG_{p^\star, \mu_0, \gamma}$, we can find a $\left( \frac{1}{\gamma^2} - 1 \right)\tau$-cover of size 
 $1$.  
\end{restatable}


For any two distributions $\cD_{\mu_1, p^\star}, \cD_{\mu_2, p^\star} \in \cG_{p^\star, \mu_0, \gamma}$, it holds that $\frac{\mu_1(x)}{\mu_2(x)} \in [\gamma^2, \tfrac{1}{\gamma^2}]$ for all $x \in \text{supp}(\mu_0)$. Consequently, for any hypothesis $h$, we have
\[
\gamma^2 \cdot \err_{\cD_{\mu_2, p^\star}}(h) \leq \err_{\cD_{\mu_1, p^\star}}(h) \leq \frac{1}{\gamma^2} \cdot \err_{\cD_{\mu_2, p^\star}}(h).
\]
Therefore, by the definition of the $(\cH, \tau)$-divergence, it follows that
\[
d_{\cH, \tau}(\cD_{\mu_1, p^\star}, \cD_{\mu_2, p^\star}) \leq \left( \frac{1}{\gamma^2} - 1 \right)\tau.
\]
Hence, for domain space $\cG_{p^\star, \mu_0, \gamma}$, we can find a $\left( \frac{1}{\gamma^2} - 1 \right)\tau$-cover of size 
 $1$.  
\section{Discussion and Limitations}
This work was inspired by the very real and widely acknowledged problem of transferring research from well-funded flagship medical research institutions to essentially all communities, even those with no direct access to these facilities, "from preeminent bench to geographically remote bedside" (\cite{deng-representation}).  Our succinct characterization and algorithm close some facets of this central question, while opening others. 


\textbf{How to compute the shattering dimension.} 
As discussed in \Cref{rmk:tau}, the choice of $\tau$ reflects a trade-off between the strictness of the generalization goal and the fraction of domains that can satisfy it, as captured by $\gdim(\cH, \cG, \tau, \alpha)$. However, choosing an appropriate $\tau$ depends on understanding the value of $\gdim(\cH, \cG, \tau, \alpha)$, which--like the VC dimension--is generally difficult to compute. Developing tools or approximation techniques to estimate $\gdim$ is an important direction for future work.

\textbf{Choice of hypothesis class $\cH$.} 
We focus on the setting where a hypothesis class $\cH$ is given and assumed to contain a reasonably good hypothesis that generalizes across all domains. However, in practice, one may have access to a collection of candidate classes. How should we select the best class--one that contains a reasonably good hypothesis but is not overly complex--for generalization? This question relates to structured ERM in the learning theory literature and presents an interesting open direction.

\textbf{Gap between upper and lower bounds.} 
Although we show that the domain shattering dimension provides both upper and lower bounds on the domain sample complexity, a small gap remains. Our upper bound guarantees that $O(\gdim(\cH, \cG, \tau, \alpha))$ sampled domains are sufficient to learn a predictor with error at most $\tau$ on most domains, while the lower bound shows that $\Omega(\gdim(\cH, \cG, \tau, \alpha))$ domains are necessary to achieve an error threshold $\tau' \in (\tau - \alpha, \tau)$. Additionally, the lower bound does not apply to all domain families $\cG$; in some cases, the construction requires augmenting $\cG$ with a few extra domains. It would be of moderate interest to close this gap. 

\textbf{Unlabeled data from some unseen groups.} 
In this work, we assume no information is available from unobserved domains. 
However, in real-world applications such as healthcare, labeled data can be expensive—particularly in rural regions where diagnostic tools may be limited—while unlabeled data are often cheaper and more accessible. This brings us full circle to the motivating scenario in the original paper on domain generalization~\cite{blanchard2011generalizing}, what is there termed automatic gating of flow cytometry data.  In this problem, each patient yields a patient-specific distribution (i.e.\ domain) of $d$-vectors of attributes of cells.  The label might capture whether or not the given cell is a lymphocyte.  Our work strengthens the results of~\cite{blanchard2011generalizing} by obtaining generalization to all unseen patients, and not just in expectation over unseen patients.  This inspiring scenario, in which unlabled data are plentiful, raises the following question: If we permit the (fixed) learned hypothesis to be modified based on unlabeled samples from the test domain, can we beat our lower bound and get away with training on {\em fewer} than $\gdim(\cH,\cG,\tau,\alpha)$ domains, while maintaining the learning objective of performing well on {\em every} unseen domain?


\bibliographystyle{plainnat}
\bibliography{ref}







\end{document}